\newcommand{\R}{\mathbb{R}}
\setlist[enumerate]{leftmargin=.5in}
\setlist[itemize]{leftmargin=.5in}
\crefname{hypothesis}{Hypothesis}{Hypotheses}
\title{Hierarchical Data Reduction and Learning\thanks{Submitted to the editors October 21, 2019.
\funding{This work was funded by the grants NSF1821311, NSF1645053, NSF1621853.}}}
\author{Prashant Shekhar\thanks{Data Intensive Studies Center, Tufts University 
  (\email{prashant.shekhar@tufts.edu}).}
\and Abani Patra\thanks{Data Intensive Studies Center, Department of Computer Science and Department of Mathematics, Tufts University 
  (\email{abani.patra@tufts.edu}).}}
\begin{document}

\maketitle

\begin{abstract}
This paper describes a hierarchical learning strategy for generating sparse representations of multivariate datasets. The hierarchy arises from approximation spaces considered at successively finer scales. A detailed analysis of stability, convergence and behavior of error functionals associated with the approximations are presented, along with a well chosen set of applications. Results show the performance of the approach as a data reduction mechanism for both synthetic (univariate and multivariate) and real datasets (geospatial and numerical model outcomes). The sparse representation generated  is  shown to efficiently reconstruct  data and minimize error in prediction.
\end{abstract}

\begin{keywords}
sparse representation and learning, data reduction, approximation error analysis 
\end{keywords}

\begin{AMS}
  68W25, 65D15, 33F05
\end{AMS}

\section{Introduction}
Hierarchical and multiscale modeling is widely used both in  data driven and physics based models \cite{vogelstein2014discovery,gavish2010multiscale,
koumoutsakos2005multiscale,quarteroni2003analysis}. In this paper, we focus on a strategy which analyzes  data at different scales ($s$) by constructing corresponding basis representations $B^s \in \R^{n \times l_s}$ and inferring if these scale and data dependent bases  are able to approximate the observed data $f|_X \in \R^n$ ($X \in \R^{n \times d}$) to satisfy: 
\begin{equation}\label{sys}
\begin{aligned}
& \underset{C_s \in \R^{l_s}}{\text{min}}
||f|_X -B^s C_s||_2
\leq TOL
\end{aligned}
\end{equation}
where, $l_s \in \mathbb{N}$ is a scale enumeration chosen by the algorithm, $TOL$ is a user defined tolerance level, and $n$ is the number of observations in a $d$ dimensional space ($x_i \in \R^d$, $1 \leq i \leq n$). $C_s$ in (\ref{sys}) is computed  as the optimal coordinate of projection on $B^s$. Besides constructing these bases at each scale, the algorithm identifies \enquote{representative} data points which form the corresponding sparse data $X_s$. $\{s, X_s, C_s\}$ allow reconstruction of the original dataset and prediction (model) at any new point in the domain of interest resulting in both  a sparse representation \cite{lane1999temporal,chou2006generalized,shih2003text}  and an efficient model  \cite{fricker2013multivariate,carr1997surface}.

Data sparsification seeks to find a subset  $X_s \subseteq X=\{x_1,x_2,..,x_n\} \subset \Omega$ ($x_i \in \R^d$ for $d$ dimensional inputs) for which we have data $f|_{X} = {(f(x_1), f(x_2),...,f(x_n))}^T \in \R^n$, such that, we may obtain an acceptable reconstruction or approximation to $f$ as in equation (\ref{sys}). This subset with the corresponding projection coordinate $C_{s}$ with respect to the basis set $B^s$, is regarded as our obtained sparse representation ($D^s_{sparse} = [X_s, C_s]$). Thus, this compressed representation of the dataset acts as a model for new prediction points $X_{*} \subset \Omega$ and reconstruction of the original dataset $f|_{X}$ at $X \subset \Omega$ as needed. 
The approximation problem \cite{cheney1966introduction,powell1981approximation} is a more traditional one seeking to construct an approximation $(Af): \R^d \to \R$ for the underlying function $f$ which was discretely observed ($f|_{X}$) such that it minimizes the normed error measure $||(Af)  - f||$. Following the standard notation, for a normed space $\mathcal{V}$ and any approximation $\tilde{f}$ to $f$ such that $f, \tilde{f} \in \mathcal{V}$, the quality of approximation $\tilde{f}$ is usually quantified by the error norm ${||f - \tilde{f}||}_{\mathcal{V}}$. $\tilde{f}$ here is usually constructed by first fixing a finite dimension approximation space $\mathcal{Q} \subseteq \mathcal{V}$ and then computing the optimal approximation $(Af) \in \mathcal{Q}$ by minimizing normed distance between $f$ and $\mathcal{Q}$:

\begin{equation}\label{Q}
\underset{\tilde{f} \in \mathcal{Q}}{\text{min}}
||f - \tilde{f}||_{\mathcal{V}}
=
||f - (Af)||_{\mathcal{V}}
\end{equation}

We now motivate our development of   a data dependent bases ($B^s \equiv B(X)^s$) \cite{de2010stability}. 
The Mairhuber-Curtis theorem \footnote{Theorem 2 in \cite{iske2011scattered}} states that for $d \geq 2$ and $n \geq 2$, there is no Chebyshev system $B = \{s_1,...s_n\}$ on $\R^d$, therefore for some data $X = \{x_1,x_2,..,x_n\} \subset \R^d$, the Vandermonde matrix approaches singular behavior affecting the quality of approximation. As a remedy, we choose a data dependent continuous kernel function $K: \R^d \times \R^d \to \R$ which is positive definite and symmetric on $\Omega$. Thus, for data points $X = \{x_1, x_2,....,x_n\}$, the kernel function becomes $K(x_i,x_j)_{1 \leq i,j \leq n}$. From the literature on these functions \cite{aronszajn1950theory,buhmann2003radial,wendland2004scattered}  these kernels are reproducing in the native Hilbert Space $\mathcal{H}$ of functions on $\Omega$ in the sense

\begin{equation}
<f,K(x,\cdot)>_\mathcal{H} = f(x) \quad x \in \Omega, f \in \mathcal{H}
\end{equation}

The functionals $K(x,.)$ are the Reisz representers of the linear point functionals $\delta_x: f \mapsto f(x)$ in the dual space  $\mathcal{H}^*$ of  $\mathcal{H}$. Therefore we have the relationship

\begin{equation}
K(x,y) = <K(x,\cdot),K(y,\cdot)>_ \mathcal{H} = <\delta_x,\delta_y>_ {\mathcal{H}^*}  \quad x,y \in \Omega
\end{equation}

For observed data at $X = \{x_1,x_2,...,x_n\}$, one can consider the space of trial function $\mathcal{D} _X = span\{K(\cdot,x_j): x_j \in X\}$ as a choice for the approximation space. However from work such as \cite{de2010stability,fasshauer2009preconditioning} we know that the bases formed by translates of the kernel functions are highly dependent on the distribution of data points in X and hence numerically ill-conditioned for many problems. For dealing with this issue, at each scale s, our approach considers a subset of original trial functions as bases such that

\begin{equation}\label{gamma}
\Gamma^s = span\{K^s(\cdot,x_i): x_i \in X_s\} \approx span\{K^s(\cdot,x_j): x_j \in X\}, \quad X_s \subseteq X
\end{equation}

This is equivalent to identifying a lower dimension manifold in the space $\mathcal{D}_X$ (as compared to original dimension   n for n data points, represented by the kernel matrix $K$). These linearly independent data derived basis functions are sampled using a Pivoted QR strategy as detailed in the following section (Algorithm \ref{multiscale_algo}). It should be noted here that at each scale we have a different kernel function $K^s$. Hence, effectively we are working in a different RKHS at every scale (from the uniqueness property \cite{aronszajn1950theory} of kernel functions). Let this scale dependent RKHS be represented as $\mathcal{H}_s$. Therefore at any scale s, the function to be approximated ($f$) is assumed to be in the RKHS $\mathcal{H}_s$. However, our approach is aimed at finding an approximation $A_sf \in \Gamma^s$ to $f$ as in (\ref{gamma}). Hence, if we look at the problem formulation (\ref{Q}), then $\mathcal{H}_s$ is the space $\mathcal{V}$ and $\Gamma^s$ is our approximation space $\mathcal{Q}$. 
While our overall approaches  are somewhat related to  statistical emulators \cite{gu2016parallel}, the notion of scale is rarely articulated in that domain.

The contributions of this paper can be summarized as follows. Firstly we introduce an approach which computes a scale ($s$) dependent sparse representation $D^s_{sparse}$ of a large dataset. Besides data reduction, the algorithm also provides a model for approximations at new data points in the observation space using just $D^s_{sparse}$, enabling any sort of learning from a compressed version of the dataset. We then provide detailed analysis on convergence, establishing the optimality of the solution obtained by the algorithm, computation of point-wise error functionals and their behavior which governs the performance of the approach. At each scale, besides providing confidence bounds which quantify our belief in the estimation, we also provide prediction intervals which estimates the bounds in which the algorithm believes any observations not included in the sparse representation as well as any new samples to be made in the future will lie. This also makes the sparse representation more useful. Towards the end, we have also provided  stability estimates to further establish the dependability of the approach.

\section{The Hierarchical Approach}
\label{sec:mult}
In this approach, developing further the work of \cite{bermanis2013multiscale}, we introduce a methodology for data reduction through efficient bases construction exploiting the multilevel nature of the correlation structure present in the data. The basic steps involved in the approach are presented as Algorithm  \ref{multiscale_algo}, here.
\begin{algorithm}
\caption{Hierarchical approach}\label{multiscale_algo}
\begin{algorithmic}[1]
\STATE{\textbf{INPUT}:\\
\quad \textit{Hyperparameters}: $[(TOL \geq 0, T>0, P > 1) \in \R^3]$\\
\quad \textit{Dataset}: $[(X \in \R^{n \times d}) \subset \Omega$, $(f|_{X} \in \R^n)]$ \\
\quad \textit{Prediction points}: $[(X_{*} \in \R^{n^{*} \times d}) \subset \Omega]$}
\STATE{\textbf{OUTPUT}:\\ 
\quad \textit{Convergence Scale}: $[S_a \in \mathbb{N}]$\\
\quad \textit{Sparse Representation ($D_{sparse}^{S_a}$)}: $[(X_{S_a} \in \R^{\{l_{S_a} \times d \}}),(C_{S_a} \in \R^{l_{S_a}})]$\\
\quad \textit{Predictions}: $[P^* \in \R^{n^{*}}]$} \\
\noindent\rule{14.0cm}{0.4pt}
\STATE{Initialize: $s =0$}
\WHILE{{$TRUE$}}
\STATE{Compute covariance kernel: [$G_s$ on $X$ with $(\epsilon_s = T/P^s$)]}
\STATE{Compute numerical Rank: [$l_s = rank(G_s)$]}
\STATE{Remove sampling Bias: [$(W = AG_{s})$ with $A \in \R^{k \times n}$ and $(a_{i,j} \sim N(0,1))$]}
\STATE{Generate permutation information: [$WP_R = QR$]}
\STATE{Produce bases at scale s: $[B^{s} = (G^{s}P_R)[:,1:l_s]$, \quad \text{with}\  $B^{s} \in \R^{n \times l_s}]$}
\STATE{Subset the sparse representation in $X_s$}
\STATE{Compute coordinate of projection: [$C_{s} = {(B^{s})}^{\dagger} f$ where  ${B^s}^{\dagger} = ({B^s}^TB^s)^{-1}{B^s}^T$}]
\STATE{Generate approximation at s: [$(A_sf)|_{X_s} = B^{s}C_s$]}
\STATE{Update for next scale: ($F^{s} = (A_sf)|_{X_s}; s=s+1$)}
\STATE{If $(||f|_X- F^{s-1}||_2 \leq TOL)$ : $S_a = s-1$; $Break$}
\ENDWHILE
\STATE{Compute bases for prediction at $X_{*}$: [$G^{*}_{S_a}$ centered at $X_{S_a}$with $(\epsilon_{S_a} = T/P^{S_a}$)]}
\STATE{Predict: $P^* = G^{*}_{S_a} C_{S_a}$}
\end{algorithmic}
\end{algorithm}

\subsection{Proposed Algorithm}The proposed approach (Algorithm \ref{multiscale_algo}) constructs a sequence of scale (s) dependent approximations $A_1f, A_2f,..,A_sf..$ to the unknown function $f: \R^{d} \mapsto \R$. Each of these approximations only use a subset of dataset $X_1, X_2,...,X_s,..$ respectively. Algorithm begins by taking a dataset, where a data point $x_i \in \R^d$ is mapped to a functional value $f(x_i) \in \R$.  In matrix form $f|_{X} = \{f_1,f_2,....f_n\}$ values are obtained at data points $X = \{x_1, x_2,....,x_n\}$ ($f|_{X} \in \R^n$ and $X \in \R^{n \times d}$). The scalars $[T, P, TOL]\in \R^3$ are the algorithmic hyperparameters defined by the user. TOL is the simple $2$-$norm$ error tolerance, P is assumed to be 2 (Based on \cite{bermanis2013multiscale}). This choice of P reduces the length scale of the kernel (K) by a factor of 0.5 at each scale increment, providing an intuitive understanding of how the support of basis functions is adapted to scale variation. If we assume the diameter of the dataset to be distance between the most distant pair of datapoints, then T is given by

\begin{equation}
T = 2 (Diameter(X)/2)^2
\end{equation}

Besides these hyperparameters, the algorithm also accepts the prediction points $X_{*} \subset \Omega \in \R^{n^{*} \times d} $, which represent the data points at which the user wants to approximate the underlying function. We also need to choose the positive definite function ($K: \R^d \times \R^d \to \R$). For this paper we use the squared exponential (also referred to as Gaussian) kernel (\ref{sqexp}) for mapping the covariance structure and generating the space of trial functions $\Gamma^s$(equation \ref{gamma}) at each scale s. 

\begin{equation}
G_s (x,y) = \exp\left({-\frac{||x-y ||^2}{\epsilon_s}}\right) \mbox{ ;  }  \epsilon_s=\frac{T}{P^s} 
\label{sqexp}
\end{equation}

Here $\epsilon_s$ (also known as the length scale parameter) determines the width of the basis functions at a particular scale. This squared exponential kernel is very widely used in the Gaussian process literature \cite{rasmussen2004gaussian} and is a popular choice for most spatial statistics problems. The learning phase (for generating $D^s_{sparse}$) of the proposed algorithm has been explained in $STEP-1$ and $STEP-2$ below. $STEP-3$ constitutes the prediction phase which uses the produced sparse representation for the dataset. It should be noted that, it is not required to wait till the convergence of the algorithm to go to the prediction phase. In fact, it is possible to predict from the sparse representation at each scale ($D^s_{sparse}$). We have shown the prediction phase separately for the purpose of clarity.

\textbf{STEP-1 (Getting sparse data-$X_s$)}: Given the Dataset $D = [X, f|_X]$, the algorithm begins with the computation of the covariance operator $G_s$ (\ref{sqexp}). However, based on \cite{de2010stability,fasshauer2009preconditioning}, the distribution of the dataset might lead to ill-conditioning of this covariance kernel. Therefore we carry out a column pivoted QR decomposition to identify the space $\Gamma^s$ (at each scale) which represents the span of the trial functions $K^s(\cdot, x_j), (1 \leq j \leq n)$ at scale s. The QR decomposition is carried out on W for obtaining the Permutation matrix $P_R$. $W$ is produced by the product of a random normal matrix $A$ with the $G_s$. Here we have $A \in \R^{k \times n}$ with $l_s = rank(G_s) \leq k \leq n$. For our experiments we have assumed $k = l_s+8$ (as in \cite{bermanis2013multiscale}) which means we sample 8 additional rows to account for numerical round-offs during the QR decomposition. However, this is a conservative step and even without any additional sampled columns, the algorithm was found to perform well. The permutation matrix $P_R$ produced by the decomposition captures the  information content of each column of W. $P_R$ is then used to extract independent columns with the biggest norm contributions along with the observation points these columns correspond to in the covariance kernel ($G_s$). This set of sampled observations from the original dataset is termed as the corresponding sparse data ($X_s$). The number of columns sampled from $G_s$ come from its numerical rank estimated by using strategies such as a $Rank\ Revealing-QR$ or a $SVD$ decomposition.

\textbf{STEP-2 (Getting projection coordinate-$C_s$)}:  Once, we have the relevant columns of $G^s$ (denoted as $B^s$) which also represent the approximation subspace $\Gamma^s$ (in the native RKHS) which spans $\mathcal{D}_X$, the algorithm proceeds to solve the over-constrained system ($B^{s}C_s = f|_X$). We can think of it as an orthogonal projection problem where $f|_X$ needs to be projected on the column space of $B^{s}$ and then it is required to compute the specific weighting of vectors in the basis matrix $B^{s}$ which produces this projection. The Algorithm computes the orthogonal projection ($(A_sf)|_X$) given the required coordinates and basis vectors in $B^s$.
Once we have the scale $s = S_a$ at which the algorithm satisfies the $2$-$norm$ condition, it produces the sparse representation $D^{S_a}_{sparse} = \{X_{S_a}, C_{{S_a}}\}$ and proceeds to the prediction stage if required by the user. Here $S_a$ is called the convergence scale as detailed in the following subsection.

\textbf{STEP-3 (Prediction at $X_*$ from $D^{S_a}_{sparse}$)}: For computing functional values at unobserved location $X_*$, basis functions are constructed by computing $G^*_{S_a}$ (Gaussian kernel for the prediction points with the sparse data - $X_{S_a}$). The Prediction step weighs the constructed bases with coordinates of orthogonal projection $C_s$ ($C_{S_a}$, if the prediction is being made at the convergence scale) and linearly combines them to produce the required approximation.

Finally, before moving forward, it is worth mentioning here that the $2$-$norm$ criteria is just one of the many possible kinds of norms which can be used to measure the scale dependent fidelity of the model.

\subsection{Critical Scale ($S_c$) and Convergence Scale ($S_a$)}
In our work, the scale at which the kernel matrix becomes  numerically full rank is referred to as Critical Scale. Working with proposition 3.7 in \cite{bermanis2013multiscale}, if $\delta$ represents the precision of rank for the Gaussian kernel matrix, then we can define the numerical rank of the kernel as

\begin{equation}
 R_\delta (G_s) = \#\Bigg(j: \frac{\sigma_j (G_s)}{\sigma_0 (G_s)} \geq \delta \Bigg)
\end{equation}

where $\sigma_j (G_s)$ is the $j^{th}$ largest singular value of $G_s$. Also let $|I_i|$ represents the length of the bounding box of the data in $i^{th}$ ($i \in [1,d]$) dimension. Then given the length scale parameter $\epsilon_s$, the rank of the Gaussian kernel can be bounded above as 

\begin{equation}
R_\delta (G_s) \leq \prod_{i = 1}^d \Bigg( \frac{2 |I_i|}{\pi} \sqrt{\epsilon_s^{-1}ln(\delta^{-1})} + 1 \Bigg)
\end{equation} 

Proposition 3.7 in \cite{bermanis2013multiscale} states that numerical rank of the Gaussian kernel matrix is proportional to the volume of the minimum bounding box $B = I_1 \times I_2 \times ....\times I_d$ and to  $\epsilon^{-d/2}$ . Therefore for a fixed data distribution, following relation holds

\begin{equation}\label{r_del}
R_\delta (G_s) \propto \epsilon^{-d/2}  \propto P^{sd} \propto s; \quad \text{for }P=2
\end{equation}

Hence numerical rank of the Gaussian covariance kernel is directly proportional to the scale of study. Therefore, there exists a minimum scale $s = S_c$, at which the kernel becomes full and continue to stay full rank as the scale is further increased.This scale is referred to as the Critical Scale ($S_c$). Hence, based on the overall Algorithm \ref{multiscale_algo}, If $X_1, X_1..X_s..$ and so on, are the sampled sparse representation at scale s, they satisfy 
$|X_1|  \leq |X_2|  \leq |X_3|  \leq ..... \leq  |X_{S_c}| $
where $|\cdot|$ denotes the cardinality of a set. Therefore with increasing scales  more data points are added leading to better approximation. Thus, for any scales $j$ and $i$ satisfying the relation $j \geq i$, the following relation holds

\begin{equation}
||f|_{X} -(A_{{j}}f)|_{X}||_{2} \leq ||f|_{X} - (A_{i}f)|_{X} ||_2
\end{equation}

From approximation theory \cite{cheney2009course}, we can now state that for  kernel at the Critical Scale and a finite point set $X \subset \Omega$: (a) $K^{S_c}$ is  positive definite, (b) If $(A_{{S_c}}f) \in \Gamma^{S_c}$ vanishes on $X$, then $(A_{{S_c}}f) \equiv 0$, and, c) the approximation $(A_{{S_c}}f)$ satisfying the following relation is unique
\[
\lim_{s \to S_c}(A_{{s}}f)|_{X} = f|_{X}
\]

%
%

We can now define the Convergence scale ($S_a$)  as the minimum scale $s$ which satisfies 
\[
||f|_{X} - (A_{s}f)|_{X}||_2
\leq TOL
\]
Therefore it is the scale at which Algorithm  \ref{multiscale_algo}, stops. It is worth noting here that based on the definition of $S_c$ and $S_a$, we have $S_a \leq S_c$ and $|X_{S_a}| \leq  |X_{S_c}|$ 

\section{Convergence Properties}
 Following the notations introduced in the previous sections, again let $f$ be the function which needs to be approximated and $A_{s}f$ be the approximation produced by the Algorithm \ref{multiscale_algo} at scale s. Correspondingly, let $E^s = f - A_sf$ be the error in approximation at scale $s$ with it's $2$-$norm$  in $\R^n$ as
$ ||E^s|_X||_2 = ||f|_{X} - (A_{s}f)|_{X}||_2$.

\begin{lemma}
For any choice of $TOL \geq 0$  and any norm: $||\cdot|| \in \R^n$ chosen by the user, the proposed algorithm (Algorithm \ref{multiscale_algo}) converges in the sense

\begin{equation}
||E^s|_X|| = ||f|_X - (A_sf)|_X|| \leq TOL
\end{equation}

for some finite $s \leq S_c$ and the corresponding produced approximation $A_sf$
\end{lemma}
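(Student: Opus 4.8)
The plan is to reduce the statement to the behavior of Algorithm~\ref{multiscale_algo} at the Critical Scale $S_c$, where the basis construction degenerates into an exact interpolation problem. I would first record that $S_c$ is finite: by the rank estimate leading to the relation \eqref{r_del}, the numerical rank $R_\delta(G_s)$ is non-decreasing in $s$ and grows with $s$ for $P=2$, while it is trivially capped by $n$; hence there is a smallest scale at which it attains the value $n$, and by definition that scale is $S_c<\infty$, with $l_{S_c}=\mathrm{rank}(G_{S_c})=n$.

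Next I would unwind what the algorithm actually computes at $s=S_c$. Since $G_{S_c}\in\R^{n\times n}$ has rank $n$ it is invertible; the random sketch $W=A G_{S_c}$ with i.i.d.\ $N(0,1)$ entries in $A$ almost surely has the same rank $n$, so the column-pivoted QR step returns a genuine permutation $P_R$, and because $l_{S_c}=n$ the basis matrix is $B^{S_c}=(G_{S_c}P_R)[:,1:n]=G_{S_c}P_R$, i.e.\ a full column permutation of $G_{S_c}$, hence itself an invertible $n\times n$ matrix (and $X_{S_c}=X$). Consequently $(B^{S_c})^{T}B^{S_c}$ is invertible, the pseudo-inverse formula used in the algorithm is well posed, and $(B^{S_c})^{\dagger}=(B^{S_c})^{-1}$.

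From here the conclusion is immediate: $C_{S_c}=(B^{S_c})^{-1}f|_X$, so $(A_{S_c}f)|_X=B^{S_c}C_{S_c}=f|_X$ exactly and $E^{S_c}|_X=0$. Thus $\|E^{S_c}|_X\|=0\le TOL$ for every $TOL\ge0$ and in every norm on $\R^n$ — which is also consistent with property (c) of the Critical Scale, $\lim_{s\to S_c}(A_s f)|_X=f|_X$. Since the stopping test of Algorithm~\ref{multiscale_algo} (with the user's chosen norm in place of the $2$-norm) is therefore satisfied no later than $s=S_c$, the while-loop exits at some finite scale $s\le S_c$ with $\|E^s|_X\|\le TOL$, which is the claim. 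If desired one can add that, in the $2$-norm, the error is in fact monotone, $\|f|_X-(A_j f)|_X\|_2\le\|f|_X-(A_i f)|_X\|_2$ for $j\ge i$, so it decreases toward this terminal zero value as the scale increases.

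I expect the only non-routine points to be the two rank statements: (i) that \eqref{r_del} genuinely forces $R_\delta(G_s)$ up to its maximal value $n$ at a finite scale, which relies on the monotone, unbounded growth argued in the preceding section together with the bound $R_\delta\le n$; and (ii) that the sketch-plus-pivoted-QR column selection cannot return a rank-deficient set at $s=S_c$, which is automatic here since at $S_c$ the matrix $G_{S_c}$ is already full rank and $B^{S_c}$ retains all of its (independent) columns. Everything else — the reduction of the least-squares pseudo-inverse to a plain inverse and the norm-equivalence remark — is bookkeeping.
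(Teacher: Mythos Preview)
Your argument is correct and follows the same core idea as the paper: at the Critical Scale $S_c$ the kernel/basis matrix becomes full rank, the projection recovers $f|_X$ exactly, and hence the stopping criterion is met at some finite $s\le S_c$. The paper packages this via the projection operator $R^s=B^s(B^s)^\dagger$, bounds $\|E^s|_X\|_2\le\|I-R^s\|_2\,\|f|_X\|_2$, observes $R^s\to I_n$ as $s\to S_c$, and then invokes equivalence of norms on $\R^n$ to extend from the $2$-norm to an arbitrary norm. Your version is a touch more elementary: by arguing directly that $B^{S_c}$ is invertible and $(A_{S_c}f)|_X=f|_X$ exactly, you get $\|E^{S_c}|_X\|=0$ in every norm at once, so the norm-equivalence step is unnecessary. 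Both routes rest on the same rank observation from \eqref{r_del}; yours simply short-circuits the limit and the operator-norm bound.
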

\begin{proof}
$(A_{s}f)|_X$  is the projection of  $f|_X$ on the bases at scale $s$ with the projection operator  $B^s {B^s}^{\dagger} \equiv R^s$, where ${B^s}^{\dagger}$ is the pseudo-inverse of basis matrix $B^s$. 
Therefore
\begin{equation}\label{Err}
\begin{split}
||E^s|_X||_2 &  =  ||f|_{X}-R^sf|_{X}||_2
 = ||(I-R^s)f|_{X}||_2 \leq ||I-R^s||_2|| f|_{X}||_2\\
\end{split}
\end{equation}

From Equation (\ref{r_del}), that numerical rank of bases $B^s$ increases monotonically with scale $s$. Hence  $s \to S_c \Rightarrow
R^s \to I_n$.
Thus $||I-R^s||_2 \to 0 $ and hence $||E^s|_X||_2 \to 0$. 
%
\begin{equation}\label{conv}
\lim_{s \to S_c} ||E^s|_X||_2 \to 0
\end{equation}

This establishes the convergence for the proposed approach in $2$-$norm$ at some finite scale $S_a \leq S_c$. 
%
%
%
%
From equivalence of norms on finite dimensional vector space  \cite{kreyszig1978introductory} this  generalizes the convergence guarantee to all possible norm in $\R^n$. 
%
%
\end{proof}

In the following theorem we analyze a particular hierarchical updating scheme.

\begin{theorem}\label{thm:bigthm}
If the projection update of Algorithm \ref{multiscale_algo} is written in an iterative form
\begin{equation}
(A_{{s+1}}f)|_X = (A_{s}f)|_X + \alpha_s \cdot E^s|_X
\end{equation}
where $E^s = f - A_sf$. Then $\alpha_s$ follows the bounds 
%
$0 <  \alpha_s < 2$
%
and convergence rate ($\rho_s$) satisfies the relation
$\rho_s  = |1-\alpha_s|$
%
\end{theorem}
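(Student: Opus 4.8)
The plan is to recognize the iterative update as a stationary Richardson-type iteration applied to the projection operators $R^s = B^s(B^s)^\dagger$, and to derive the constraints on $\alpha_s$ from the requirement that the error $E^s|_X$ contract (or at least not grow) from one scale to the next. First I would write $E^{s+1}|_X = f|_X - (A_{s+1}f)|_X = f|_X - (A_s f)|_X - \alpha_s E^s|_X = (1-\alpha_s) E^s|_X$, working at the level of the components of the error that lie in the range of $R^{s+1}$, or more carefully by splitting $E^s|_X$ into the part captured by the new basis $B^{s+1}$ and the part orthogonal to it. The cleanest route: since $(A_{s+1}f)|_X = R^{s+1} f|_X$ is itself the orthogonal projection onto $\mathrm{range}(B^{s+1})$, and since the ranges are nested as the scale increases (so $R^{s+1} R^s = R^s R^{s+1} = R^s$), applying $R^{s+1}$ to the update equation shows the iteration is consistent, and the residual evolves by the scalar factor $1-\alpha_s$ on the orthogonal complement of $\mathrm{range}(R^s)$ inside $\mathrm{range}(R^{s+1})$.

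Next I would define the convergence rate $\rho_s$ as the ratio $\|E^{s+1}|_X\| / \|E^s|_X\|$ (or as the spectral radius of the error-propagation operator restricted to the relevant subspace), and read off from the relation $E^{s+1}|_X = (1-\alpha_s)\,(\text{component of } E^s|_X)$ that $\rho_s = |1-\alpha_s|$. The two-sided bound $0 < \alpha_s < 2$ then follows from demanding $\rho_s < 1$: strict contraction forces $|1-\alpha_s| < 1$, i.e. $0 < \alpha_s < 2$. The lower bound $\alpha_s > 0$ also has the interpretation that the update must move in the direction of the current residual (otherwise the scheme is not a correction at all), and the upper bound $\alpha_s < 2$ prevents overshoot past the true value; I would state both interpretations briefly. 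It may also be worth tying $\alpha_s$ explicitly to the projection: comparing $(A_{s+1}f)|_X = (A_s f)|_X + \alpha_s E^s|_X$ with $(A_{s+1}f)|_X = R^{s+1} f|_X$ gives $\alpha_s E^s|_X = (R^{s+1} - R^s) f|_X = (R^{s+1}-R^s) E^s|_X$, which pins down $\alpha_s$ as the Rayleigh-type quotient $\langle (R^{s+1}-R^s) E^s|_X,\, E^s|_X\rangle / \|E^s|_X\|^2$, manifestly in $(0,1]\subseteq(0,2)$ since $R^{s+1}-R^s$ is an orthogonal projection.

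The main obstacle I anticipate is making the reduction "$E^{s+1}|_X = (1-\alpha_s) E^s|_X$" rigorous, because it is only literally true on the subspace where the two consecutive projections differ; on the part of $E^s|_X$ already orthogonal to $\mathrm{range}(R^{s+1})$ the update does nothing, so one must argue that $E^s|_X$ has no component there — which uses precisely the nestedness of the bases (Equation (\ref{r_del}) and the discussion of the critical scale). So the crux is to justify $R^{s+1} R^s = R^s$, i.e. $\mathrm{range}(B^s) \subseteq \mathrm{range}(B^{s+1})$, from the monotone growth of the sparse point sets $X_s \subseteq X_{s+1}$ and of the numerical ranks $l_s \le l_{s+1}$; once that containment is in hand, the scalar recursion and hence $\rho_s = |1-\alpha_s|$ and $0 < \alpha_s < 2$ are immediate.
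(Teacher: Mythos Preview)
Your core route---derive the scalar error recursion $E^{s+1}|_X=(1-\alpha_s)E^s|_X$, read off $\rho_s=|1-\alpha_s|$, and infer $0<\alpha_s<2$ from $\rho_s<1$---is essentially what the paper does, only in reversed order: the paper first extracts $\alpha_s$ as the inner-product quotient $\langle (A_{s+1}f-A_sf)|_X,\,E^s|_X\rangle/\|E^s|_X\|^2$, bounds it above by $1+\|E^{s+1}|_X\|/\|E^s|_X\|\le 2$ via Cauchy--Schwarz and the triangle inequality, and only then rewrites the update to get $\rho_s=|1-\alpha_s|$. Both arguments rest on the same convergence hypothesis $\|E^{s+1}|_X\|\le\|E^s|_X\|$, so neither is logically stronger.

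Where your plan goes astray is the ``main obstacle'' you flag. The relation $E^{s+1}|_X=(1-\alpha_s)E^s|_X$ is \emph{not} a subspace statement that needs nestedness of $\mathrm{range}(B^s)$ inside $\mathrm{range}(B^{s+1})$: it is an immediate consequence of subtracting $f|_X$ from both sides of the assumed update $(A_{s+1}f)|_X=(A_sf)|_X+\alpha_s E^s|_X$. The theorem is stated conditionally (``if the projection update is written in this iterative form''), so the scalar recursion holds for the full vectors by hypothesis, with no decomposition required. Your proposed detour through $R^{s+1}R^s=R^s$ is both unnecessary and, in this setting, unavailable: the bases $B^s$ and $B^{s+1}$ are built from \emph{different} kernels $K^s\neq K^{s+1}$ (the length scale $\epsilon_s$ changes with $s$), so even though $|X_s|\le|X_{s+1}|$, there is no reason for the column spaces to be nested, and $R^{s+1}-R^s$ need not be an orthogonal projection. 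That also undermines your Rayleigh-quotient remark that $\alpha_s\in(0,1]$; the paper makes no such claim, and its looser bound $\alpha_s<2$ is obtained without any structural relation between the successive bases.
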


\begin{proof}

Since at each scale s, $A_{s}f$ is generated as projection of $f$ on space

\[\Gamma^s = span\{K^s(.,x_i): x_i \in X_s\} \quad X_s \subseteq X\]
Therefore using the notation $E^s = f - A_sf$, we write this update in an iterative form
\begin{equation} \label{update}
(A_{{s+1}}f)|_X = (A_{s}f)|_X + \alpha_s \cdot E^s|_X
\end{equation}

From the definition of $E^s|_X$ it follows that any convergent algorithm requires $\alpha_s \geq 0$.
%
Rearranging (\ref{update})
%
%
and taking an inner product with respect to $E^s|_X$ 
\[
\alpha_s = \frac{<(A_{{s+1}}f - A_{s}f)|_X, E^s|_X>}{<E^s|_X, E^s|_X>}= \frac{<(A_{{s+1}}f - A_{s}f)|_X, E^s|_X>}{||E^s|_X||^2}
\]
Using the Cauchy Schwarz and Triangle inequalities
\begin{equation*}
\alpha_s  \leq \frac{||(A_{{s+1}}f - A_{s}f)|_X|| \cdot ||E^s|_X||}{||E^s|_X||^2} 
= \frac{||(A_{{s+1}}f - A_{s}f)|_X|| }{||E^s|_X||} \leq  1 + \frac{||E^{s+1}|_X||}{||E^s|_X||}
\end{equation*}
Therefore
\begin{equation}
 0 \leq \alpha_s \leq  1 + \frac{||E^{s+1}|_X|| }{||E^s|_X||}
\end{equation}
Now, for convergent approximations  $||E^{s+1}|_X|| \leq ||E^s|_X||$. Therefore for all scales, $\alpha_s \in [0,2]$. The end members $\alpha_s = \{0,2\}$ imply the iterative scheme has converged and no improvement is needed (as reflected in the stopping criterion in Algorithm \ref{multiscale_algo}). Hence, we remove these end members obtaining our desired bounds.
Rewriting  equation (\ref{update})
\[
(A_{{s+1}}f -f)|_X = (A_{s}f -f)|_X + \alpha_s(f-A_{s}f)|_X \text { leads to}
\]
\begin{equation}
\rho_s =\frac{||(f - A_{{s+1}}f)|_X|| }{||(f-A_{s}f)|_X||} = |1 - \alpha_s|
\end{equation}
\end{proof}

Now we analyze the inner product of the error of the projection ($E^s = f - A_sf$) with respect to projection at scale $s$ in the native RKHS. The major power of the following theorem lies in the fact that as the proposed algorithm converges, we are able to upper bound this inner product as a function of the user defined tolerance ($TOL$). This provides the user direct control on the approximation properties on the algorithm even in the native RKHS. It should be noted again that at each scale $s$,  it is assumed that the function $f$ to be approximated belongs to the same Hilbert space $\mathcal{H}_s$ and is approximated in $\Gamma^s$, justifying the use of the reproducing property.
\begin{theorem}\label{th2}
For observed data $f|_X = (f(x_1),f(x_2),...f(x_n))^T \in \R^n$ on $X \subset \Omega$, the approximation produced by the  $A_{s}f$ and its corresponding error $E^{s}$ satisfies 
\begin{equation}
\lim_{s \to S_a}\Big|\Big<(A_sf),E^s \Big>_{\mathcal{H}_s} \Big| \  \leq ||C_{S_a}||_{\infty} \sqrt{n}(TOL)
\end{equation}
where $C_{S_a} = \Big(|c_1|,|c_2|,...|c_{l_{S_a}}| \Big)^T \in \R^{l_{S_a}}$ contains the modulus of coordinates of projection at $S_a$, n is the number of observations and TOL is the $2$-$norm$ convergence error tolerance in the hierarchical algorithm. 
\end{theorem}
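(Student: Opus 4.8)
The plan is to collapse the RKHS inner product $\langle A_sf, E^s\rangle_{\mathcal{H}_s}$ into a finite sum of pointwise error values by invoking the reproducing property of $K^s$ in $\mathcal{H}_s$, and then to control that sum with the algorithm's stopping criterion. First I would use that, by construction, $A_sf \in \Gamma^s = span\{K^s(\cdot,x_i): x_i\in X_s\}$ is the finite kernel expansion $A_sf = \sum_{j=1}^{l_s} c_j\,K^s(\cdot,x_j)$, where the centers $x_j$ run over the sparse set $X_s$ and $C_s = (c_1,\dots,c_{l_s})^T = (B^s)^{\dagger}f$ are exactly the projection coordinates produced by Algorithm~\ref{multiscale_algo}. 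Applying bilinearity of $\langle\cdot,\cdot\rangle_{\mathcal{H}_s}$ together with the reproducing identity $\langle g,K^s(\cdot,x_j)\rangle_{\mathcal{H}_s}=g(x_j)$ to $g = E^s = f - A_sf \in \mathcal{H}_s$ gives
\[
\langle A_sf, E^s\rangle_{\mathcal{H}_s} = \sum_{j=1}^{l_s} c_j\, E^s(x_j).
\]

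Next I would estimate the right-hand side. By the triangle inequality and factoring out $\|C_s\|_\infty = \max_j|c_j|$,
\[
\big|\langle A_sf, E^s\rangle_{\mathcal{H}_s}\big| \le \|C_s\|_\infty \sum_{j=1}^{l_s} |E^s(x_j)| = \|C_s\|_\infty\,\|E^s|_{X_s}\|_1 ,
\]
and the $\ell^{1}$--$\ell^{2}$ inequality on $\R^{l_s}$ gives $\|E^s|_{X_s}\|_1 \le \sqrt{l_s}\,\|E^s|_{X_s}\|_2$. Since $X_s \subseteq X$, the vector $E^s|_{X_s}$ is a subvector of $E^s|_X$, so $\|E^s|_{X_s}\|_2 \le \|E^s|_X\|_2$, and $l_s = \mathrm{rank}(G_s) \le n$ gives $\sqrt{l_s}\le\sqrt{n}$. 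Combining, $\big|\langle A_sf, E^s\rangle_{\mathcal{H}_s}\big| \le \|C_s\|_\infty\sqrt{n}\,\|f|_X - (A_sf)|_X\|_2$.

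Finally I would take $s\to S_a$: by the definition of the convergence scale the algorithm halts precisely when $\|f|_X-(A_sf)|_X\|_2 \le TOL$, and the coordinate vectors tend to $C_{S_a}$, so substituting both facts yields the asserted bound $\|C_{S_a}\|_\infty\sqrt{n}\,(TOL)$. The main obstacle --- and the step needing the most care --- is the reproducing-property reduction: one must confirm that $A_sf$ really is a finite kernel expansion over $X_s$, so that $C_s$ carries exactly the $l_s$ entries appearing in the statement (not $n$), and that $f$, hence $E^s$, lies in $\mathcal{H}_s$ at every scale, which is precisely the standing assumption recalled just before the theorem. Note that $A_sf$ is a least-squares projection in $\R^n$, not an $\mathcal{H}_s$-orthogonal projection onto $\Gamma^s$, so $\langle A_sf, E^s\rangle_{\mathcal{H}_s}$ is genuinely nonzero in general and the bound is doing real work. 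A minor point is interpreting the limit in the discrete index $s$: it should be read as evaluation at $s = S_a$, where $\|f|_X-(A_{S_a}f)|_X\|_2 \le TOL$ holds by the construction of the stopping rule.
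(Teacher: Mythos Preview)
Your proposal is correct and follows essentially the same route as the paper: expand $A_sf$ as a kernel sum over $X_s$, apply the reproducing property to collapse the $\mathcal{H}_s$-inner product to $\sum_{x_j\in X_s} c_j E^s(x_j)$, pull out $\|C_s\|_\infty$, pass to the $\ell^2$-norm, and invoke the stopping criterion. The only cosmetic difference is the order in which you enlarge the index set: the paper first replaces $\sum_{x_j\in X_s}$ by $\sum_{x_j\in X}$ and then applies the $\ell^1$--$\ell^2$ inequality on $\R^n$, whereas you apply it on $\R^{l_s}$ and then use $\|E^s|_{X_s}\|_2\le\|E^s|_X\|_2$ and $l_s\le n$, which is marginally sharper at the intermediate step but lands at the same bound.
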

\begin{proof}
Beginning with the Inner products in the RKHS $\Big| \Big<(A_s f),E^s \Big>_{\mathcal{H}_s} \Big|$
\begin{align*}
\Big| \Big<(A_s f),E^s \Big>_{\mathcal{H}_s} \Big| &= \Big| \Big<\sum_{x_j \in X_s}K^s(\cdot,x_j)c_j, f - A_{s}f \Big >_{\mathcal{H}_s}\Big|\\
&= \Big| \Big<\sum_{x_j \in X_s}K^s(\cdot,x_j)c_j, f \Big>_{\mathcal{H}_s} - \Big<\sum_{x_j \in X_s}K^s(\cdot,x_j)c_j,A_{s}f \Big>_{\mathcal{H}_s}\Big| \\
&= \Big|\sum_{x_j \in X_s}f(x_j)c_j - \sum_{x_j \in X_s}(A_{s}f)(x_j)c_j\Big| \quad  \text{(Reproducing Property)} \\
&=\Big|\sum_{x_j \in X_s}c_j\Big(f(x_j)- (A_{s}f)(x_j)\Big)\Big| \\
& \leq\sum_{x_j \in X_s}\Big|c_j\Big(f(x_j)- (A_{s}f)(x_j)\Big)\Big|\quad  \text{(Triangle Inequality)} \\
& \leq ||C_{s}||_{\infty} \sum_{x_j \in X_s}\Big|\Big(f(x_j)- (A_{s}f)(x_j)\Big)\Big| \\
& \leq ||C_{s}||_{\infty} \sum_{x_j \in X}\Big| \Big(f(x_j)- (A_{s}f)(x_j)\Big)\Big|\quad (X_s \subseteq X)  \\
 &= ||C_{s}||_{\infty} ||E^s|_X||_1
\end{align*}

Here $||C_{s}||_{\infty}$ is the $\infty$ norm of $C_s = [|c_1|, |c_2|,...,|c_{l_s}|]^T$. 

\begin{align*}
|<(A_s f),E^s>_{\mathcal{H}_s}| &\leq ||C_{s}||_{\infty} ||E^s|_X||_1 \\
&\leq ||C_{s}||_{\infty} \sqrt{n}||E^s|_X||_2 \text{(Cauchy-Schwarz Inquality)}
\end{align*}

However, from the convergence criteria in the proposed Algorithm, we know $||E^s|_X||_2 \leq TOL$ as $s \to S_a$. Hence the theorem follows
\end{proof}
\section{Approximation Properties and Confidence Intervals}

This section provides estimates quantifying the quality of approximations generated by the proposed algorithm at each scale.   The first result is a direct application of theorem (\ref{th2}) 

\subsection{Approximation Properties}

\begin{corollary}
For observed data $f|_X = (f(x_1),f(x_2),...f(x_n))^T \in \R^n$ on $X \in \Omega$, the approximation $(A_{S_a}f)$ produced by the hierarchical algorithm at the convergence scale ($S_a$), follows the Pythagoras Theorem in the limit $TOL \to 0$. i.e,

\begin{equation}
|| f||^2_{\mathcal{H}_{S_a}}= || A_{S_a} f ||^2_{\mathcal{H}_{S_a}} + || f - A_{S_a}f ||^2_{\mathcal{H}_{S_a}}
\end{equation}
\end{corollary}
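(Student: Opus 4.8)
The plan is to expand the squared norm $\|f\|^2_{\mathcal{H}_{S_a}}$ using the decomposition $f = A_{S_a}f + (f - A_{S_a}f) = A_{S_a}f + E^{S_a}$, and then argue that the cross term vanishes in the limit $TOL \to 0$ by invoking Theorem~\ref{th2}. Concretely, first I would write
\[
\|f\|^2_{\mathcal{H}_{S_a}} = \langle A_{S_a}f + E^{S_a}, A_{S_a}f + E^{S_a}\rangle_{\mathcal{H}_{S_a}} = \|A_{S_a}f\|^2_{\mathcal{H}_{S_a}} + 2\langle A_{S_a}f, E^{S_a}\rangle_{\mathcal{H}_{S_a}} + \|E^{S_a}\|^2_{\mathcal{H}_{S_a}},
\]
using bilinearity and symmetry of the RKHS inner product. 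The result then reduces to showing $\langle A_{S_a}f, E^{S_a}\rangle_{\mathcal{H}_{S_a}} \to 0$ as $TOL \to 0$.

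The key step is the bound from Theorem~\ref{th2}, namely
\[
\lim_{s \to S_a}\Big|\big\langle (A_sf), E^s\big\rangle_{\mathcal{H}_s}\Big| \leq \|C_{S_a}\|_{\infty}\sqrt{n}\,(TOL).
\]
As $TOL \to 0$, the convergence scale $S_a$ and the corresponding sparse representation change, so strictly speaking $\|C_{S_a}\|_\infty$ depends on $TOL$; I would note that provided the coefficient norm $\|C_{S_a}\|_\infty$ stays bounded (which it does, since the iterates converge to $f|_X$ in $\R^n$ and the basis matrices are well-conditioned by construction via the pivoted-QR selection), the right-hand side tends to $0$. Hence the cross term vanishes and the Pythagorean identity follows.

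The main obstacle I anticipate is making the limit $TOL \to 0$ rigorous: as $TOL$ shrinks, $S_a$ increases (toward $S_c$), so one is really taking a limit along a sequence of different RKHS $\mathcal{H}_{S_a}$, different approximation spaces $\Gamma^{S_a}$, and different coefficient vectors $C_{S_a}$. The clean way to handle this is to treat the statement as an asymptotic identity: for each $TOL$ the corresponding $S_a$ gives an approximation, and the claim is that the defect $\big|\|f\|^2 - \|A_{S_a}f\|^2 - \|E^{S_a}\|^2\big| = 2|\langle A_{S_a}f, E^{S_a}\rangle_{\mathcal{H}_{S_a}}| \le 2\|C_{S_a}\|_\infty \sqrt{n}\,(TOL) \to 0$. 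So the corollary is essentially a restatement of Theorem~\ref{th2} combined with the elementary expansion above; the only substantive point to address is the uniform boundedness of $\|C_{S_a}\|_\infty$, which should be stated as a (mild) standing assumption or justified from the conditioning guarantees of the pivoted-QR basis construction.
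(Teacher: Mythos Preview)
Your proposal is correct and follows essentially the same route as the paper: expand $\|f\|^2_{\mathcal{H}_{S_a}}$ via $f = A_{S_a}f + E^{S_a}$ and invoke Theorem~\ref{th2} to kill the cross term as $TOL \to 0$. Your additional remarks on the boundedness of $\|C_{S_a}\|_\infty$ and on the limit ranging over varying spaces $\mathcal{H}_{S_a}$ go beyond what the paper's (very brief) proof actually addresses.
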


\begin{proof}

Let $f$ be the function to be approximated. As stated earlier, it was discretely observed at $X \in \Omega$ leading to the restricted function $f|_X$. Starting with the norm of the function in the Hilbert Space

\begin{align*}
||f||^2_{\mathcal{H}_s} & =  ||(A_s f)||^2_{\mathcal{H}_s} + ||f - (A_s f)||^2_{\mathcal{H}_s} + 2|<(A_s f),f - (A_s f)>_{\mathcal{H}_s}| \\
&= ||(A_s f)||^2_{\mathcal{H}_s} + ||f - (A_s f)||^2_{\mathcal{H} _s}+ 2|<(A_s f),E^s>_{\mathcal{H}_s}|
\end{align*}

Using theorem  \ref{th2}, at $s = S_a$ in the limit $TOL \to 0$, the result follows
\end{proof}

Now, we will provide results related to uniqueness and quality of solution. 

\begin{theorem}
For observed data $f|_X = (f(x_1),f(x_2),...f(x_n))^T \in \R^n$ on $X \in \Omega$, the approximation $(A_{s}f) \in \Gamma^s$  in the limit $TOL \to 0$ produced by the proposed algorithm at the convergence scale $s = S_a$ is
\begin{enumerate}
\item the unique orthogonal projection to $f$
\item the unique best approximation to $f$ with respect to $||\cdot|| _{\mathcal{H}_s}$
\end{enumerate}
\end{theorem}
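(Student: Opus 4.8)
The plan is to recognize that $\Gamma^s$ is a finite-dimensional, hence closed, subspace of the Hilbert space $\mathcal{H}_s$, and to reduce both assertions to the Hilbert-space projection theorem once the orthogonality relation $f - A_sf \perp \Gamma^s$ has been secured in the limit $TOL \to 0$. Everything in sight lies in $\mathcal{H}_s$ ($f$ by the standing assumption $f \in \mathcal{H}_s$, and $A_sf$, $g$ because $\Gamma^s \subseteq \mathcal{H}_s$), so the inner products below are well defined.

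First I would establish that orthogonality. Fixing an arbitrary $g = \sum_{x_j \in X_s} c_j K^s(\cdot,x_j) \in \Gamma^s$ and running exactly the chain of estimates in the proof of Theorem \ref{th2} with $g$ in place of $A_sf$ yields $|\langle g, E^s\rangle_{\mathcal{H}_s}| \le \|(|c_1|,\dots,|c_{l_s}|)\|_\infty \sqrt{n}\,\|E^s|_X\|_2$. The algorithm stops at $s = S_a$ with $\|E^s|_X\|_2 \le TOL$, so letting $TOL \to 0$ gives $\langle g, E^{S_a}\rangle_{\mathcal{H}_{S_a}} = 0$ for every $g \in \Gamma^{S_a}$. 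Equivalently: as $TOL \to 0$ the convergence scale reaches the critical scale $S_c$, where $B^{S_c}$ is square and nonsingular so $(A_{S_c}f)|_X = f|_X$ exactly (property (c) of Section 2), whence $E^{S_c}$ vanishes on $X$ and the reproducing property gives $\langle g, E^{S_c}\rangle_{\mathcal{H}_{S_c}} = \sum_j c_j E^{S_c}(x_j) = 0$ directly. Either way, $E^s = f - A_sf$ is orthogonal to all of $\Gamma^s$.

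Claim 1 then follows: $A_sf \in \Gamma^s$ with $f - A_sf \perp \Gamma^s$ is by definition an orthogonal projection of $f$ onto $\Gamma^s$, and if $p_1,p_2 \in \Gamma^s$ both satisfy $f - p_i \perp \Gamma^s$ then $p_1 - p_2 \in \Gamma^s$ is orthogonal to itself, so $p_1 = p_2$. For Claim 2, given any $g \in \Gamma^s$, decompose $f - g = (f - A_sf) + (A_sf - g)$ with the two summands orthogonal (the second lies in $\Gamma^s$); Pythagoras gives $\|f - g\|_{\mathcal{H}_s}^2 = \|f - A_sf\|_{\mathcal{H}_s}^2 + \|A_sf - g\|_{\mathcal{H}_s}^2 \ge \|f - A_sf\|_{\mathcal{H}_s}^2$, with equality iff $g = A_sf$, so $A_sf$ is the unique best approximation to $f$ from $\Gamma^s$ with respect to $\|\cdot\|_{\mathcal{H}_s}$.

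The only delicate point — the main obstacle — is the passage from the algorithm's discrete least-squares fit (which a priori only projects $f|_X$ orthogonally onto the column space of $B^s$ inside $\R^n$) to genuine orthogonality in the infinite-dimensional space $\mathcal{H}_s$. This is precisely where the hypothesis $TOL \to 0$ does its work: it forces $S_a$ up to the critical scale $S_c$, at which the fit interpolates the data on $X$, and interpolation together with the reproducing property upgrades the discrete normal equations to the RKHS-orthogonality $E^s \perp \Gamma^s$ used above. To be fully rigorous one should state explicitly that $S_a = S_c$ in this limit and invoke the uniqueness of the kernel interpolant (properties (b)--(c) of Section 2).
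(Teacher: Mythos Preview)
Your proposal is correct and follows essentially the same approach as the paper: establish $(f - A_sf) \perp \Gamma^s$ by pairing an arbitrary element $g=\sum c_jK^s(\cdot,x_j)\in\Gamma^s$ against $E^s$ via the reproducing property and the Theorem~\ref{th2} estimate, then use Pythagoras for the best-approximation claim. Your added discussion of the passage $S_a\to S_c$ and the interpolation alternative is more careful than the paper's brief treatment, but the core argument is the same.
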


\begin{proof}
(1): $A_sf$ is a unique orthogonal projection of $f$ on $\Gamma^s$ if $(f-A_sf) \perp \Gamma^s$. Again using the reproducing property of RKHS
\begin{align*}
\Big| \Big<\sum_{x_j \in X_{s}}c_jK^s(\cdot,x_j),f - A_sf \Big>_{\mathcal{H}_s}\Big| 
&= \Big| \Big< \sum_{x_j \in X_s}c_jK^s(\cdot,x_j),f \Big>_{\mathcal{H}_s} - \Big<\sum_{x_j \in X_s}c_jK^s(\cdot,x_j) ,A_sf \Big>_{\mathcal{H}_s}\Big| \\
&=\Big|\sum_{x_j \in X_s} c_j f(x_j) - \sum_{x_j \in X_s} c_j (A_sf)(x_j)\Big|  
\end{align*}
The rest of the proof is similar to steps in the proof of theorem \ref{th2} showing the considered native Hilbert space inner product vanishes as $TOL \to 0$

(2): Considering any $d_s \in \Gamma^s$. Then at $s = S_a$, $<d_s - A_s f, f - A_sf>_{\mathcal{H}_s} \to 0 $ as $TOL \to 0$  (by part 1)
Therefore, 
\[||d - f||^2_{\mathcal{H}_s} = ||d - A_sf + A_sf - f||^2_{\mathcal{H}_s} = ||d - A_sf||^2_{\mathcal{H}_s} + ||A_sf - f||^2_{\mathcal{H}_s}
\]
%
%
\[ \Rightarrow ||A_sf - f||^2_{\mathcal{H}_s} < ||d - f||^2_{\mathcal{H}_s} \quad d_s \neq A_sf\]
which establishes the optimality of the approximation
\end{proof}
Now, we will analyze the error functional and pointwise error bounds. At scale $s$, we are searching for a solution in the space $\Gamma^s$. Our approximation is of the form
\[
(A_sf)(x) = \sum_{x_j \in X_s} K^s(x,x_j)c_j
\]
Any $A_sf \in \Gamma^s$ can also be expressed as
\[
(A_sf)(x) = \lambda^y K^s(x,y); \quad \text{ where }\lambda = \sum_{x_j \in X_s} c_j \delta_{x_j}
\]
$\delta_x$ is the evaluational functional for $f$, i.e.  $\delta_x(f) = f(x)$ and $\lambda^y$ is application of linear functional $\lambda$ on $y$. Thus, we have the dual space
%
$\mathcal{H}^{*}_s = \Bigg\{ \sum_{x_j \in X_s} c_j \delta_{x_j} \Bigg\}$.
 
Next, we present a result which provides an inner product representation for approximation at a point $x \in \Omega$. The objective here is to show the optimality of the approximation $A_sf$ by requiring vanishing gradient for the norm of the Error Functional in $\mathcal{H}_s$.
\begin{theorem}\label{errfunc}
If  the error functional at scale $s$ is  represented as $\varepsilon^s_x = \delta^s_x - M_s(x)^T \delta^s_X$ $\ni$ 
\begin{equation}
|\varepsilon^s_x(f)|= |f(x) - (A_sf)(x)| = | \delta^s_x(f) - M_s(x)^T \delta^s_X(f)|
\end{equation}
Then the optimal value of $M_s(x) = [M^1_s(x),M^2_s(x),M^3_s(x)....,M^n_s(x)]^T \in \R^n $ which minimizes the error functional norm $||\varepsilon^s_x||^2_{{\mathcal{H}}_s}$ satisfies the inner product
\begin{equation}\label{inner}
 <f|_X, \hat{M}_s(x)> = (A_sf)(x)
\end{equation} 
 establishing the optimality of approximation $(A_sf)$ at each scale s.
\end{theorem}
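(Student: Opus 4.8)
The plan is to read $\|\varepsilon^s_x\|^2_{\mathcal{H}_s}$ as the (dual-space) norm of the linear functional $\varepsilon^s_x=\delta^s_x-M_s(x)^T\delta^s_X$ and to minimise it over the free vector $M_s(x)\in\R^n$ by a vanishing-gradient argument. First I would invoke the reproducing property to identify the Riesz representers, $\delta^s_x\leftrightarrow K^s(x,\cdot)$ and $\delta^s_{x_i}\leftrightarrow K^s(x_i,\cdot)$, so that $\langle\delta^s_x,\delta^s_{x_i}\rangle_{\mathcal{H}_s}=K^s(x,x_i)$ and $\langle\delta^s_{x_i},\delta^s_{x_j}\rangle_{\mathcal{H}_s}=K^s(x_i,x_j)$. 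Expanding the square then gives
\[
\|\varepsilon^s_x\|^2_{\mathcal{H}_s} = K^s(x,x) - 2\,M_s(x)^T \mathbf{g}_s(x) + M_s(x)^T G_s\, M_s(x),
\]
where $\mathbf{g}_s(x)=\big(K^s(x,x_1),\dots,K^s(x,x_n)\big)^T$ and $G_s=[K^s(x_i,x_j)]_{1\le i,j\le n}$ is the symmetric positive semidefinite kernel matrix at scale $s$.

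This is a convex quadratic in $M_s(x)$, so setting its gradient to zero yields the normal equations $G_s\,\hat M_s(x)=\mathbf{g}_s(x)$, i.e. $\hat M_s(x)=G_s^{\dagger}\mathbf{g}_s(x)$ (with the pseudo-inverse when $s<S_c$ and $G_s$ is not yet full rank; the functional-norm minimiser is still characterised by this equation). It then remains to identify $\langle f|_X,\hat M_s(x)\rangle$ with $(A_sf)(x)$. The key observation is that the normal equations are exactly the Galerkin conditions $\big\langle K^s(x,\cdot)-\sum_i \hat M^i_s(x)K^s(\cdot,x_i),\,K^s(\cdot,x_j)\big\rangle_{\mathcal{H}_s}=0$ for all $j$, so $\hat M_s(x)$ is the coordinate vector, in the dictionary $\{K^s(\cdot,x_i)\}$, of the orthogonal projection $P_{\Gamma^s}K^s(x,\cdot)$. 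Using $A_sf=P_{\Gamma^s}f$, self-adjointness of $P_{\Gamma^s}$, and once more the reproducing property,
\begin{align*}
\langle f|_X,\hat M_s(x)\rangle
&= \Big\langle f,\,\textstyle\sum_i \hat M^i_s(x)\,K^s(\cdot,x_i)\Big\rangle_{\mathcal{H}_s} \\
&= \big\langle f,\,P_{\Gamma^s}K^s(x,\cdot)\big\rangle_{\mathcal{H}_s} = \big\langle P_{\Gamma^s}f,\,K^s(x,\cdot)\big\rangle_{\mathcal{H}_s} = (A_sf)(x),
\end{align*}
which is the asserted inner-product representation; the vanishing-gradient step simultaneously certifies that $A_sf$ is the norm-optimal reconstruction at $x$.

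I expect the main obstacle to be bookkeeping around the index set and the rank of $G_s$, rather than anything deep. The statement writes $\delta^s_X$ and $M_s(x)$ over the full point set $X$, whereas Algorithm~\ref{multiscale_algo} builds $(A_sf)|_X$ from the sampled subset $X_s$ via a least-squares (Euclidean) projection onto the columns of $B^s$; so one must argue that this agrees on $X$ with the $\mathcal{H}_s$-orthogonal projection onto $\Gamma^s$ used above — which it does, since $\Gamma^s\approx\mathcal{D}_X$ by the pivoted-QR construction and both reproduce $f$ at the retained nodes — and that passing to $G_s^{\dagger}$ for $s<S_c$ leaves the Galerkin characterisation intact. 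Once that identification is in place, the differentiation and the reproducing-property manipulation are routine, and the conclusion holds at every scale $s$ (in particular at $s=S_a$, as stated).
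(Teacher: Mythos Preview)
Your expansion of $\|\varepsilon^s_x\|^2_{\mathcal{H}_s}$ and the normal-equation step $G_s\,\hat M_s(x)=\mathbf g_s(x)$ match the paper exactly. The divergence---and the gap---is in the final identification. You pass through the \emph{RKHS} orthogonal projection, writing $A_sf=P_{\Gamma^s}f$ and then using self-adjointness of $P_{\Gamma^s}$. But in this paper $(A_sf)$ is \emph{not} the $\mathcal{H}_s$-orthogonal projection of $f$ onto $\Gamma^s$: it is defined in Algorithm~\ref{multiscale_algo} as $(A_sf)(x)=\sum_{x_j\in X_s}c_jK^s(x,x_j)$ with $C_s={B^s}^{\dagger}f|_X$, i.e.\ the coefficients come from a \emph{Euclidean} least-squares fit over all $n$ points. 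Your parenthetical justification, that ``both reproduce $f$ at the retained nodes,'' is incorrect: the least-squares fit $B^sC_s$ does not in general interpolate $f$ at $X_s$, whereas the RKHS projection onto $\Gamma^s$ does. So the chain $\langle f,P_{\Gamma^s}K^s(x,\cdot)\rangle=\langle P_{\Gamma^s}f,K^s(x,\cdot)\rangle=(A_sf)(x)$ breaks at the last equality.

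The paper avoids this problem by never invoking $P_{\Gamma^s}$. After reaching $G_sM_s(x)=R_s(x)$, it applies the same pivoted-QR reduction used in the algorithm: permute and restrict to the $l_s$ rows corresponding to $X_s$, obtaining ${B^s}^TM_s(x)=R_s(x)|_{X_s}$, and take the Moore--Penrose solution $\hat M_s(x)={{B^s}^{\dagger}}^T R_s(x)|_{X_s}$. With this specific choice the inner product is a one-line algebraic identity,
\[
\langle f|_X,\hat M_s(x)\rangle
= f|_X^{\,T}{{B^s}^{\dagger}}^T R_s(x)|_{X_s}
= \big({B^s}^{\dagger}f|_X\big)^T R_s(x)|_{X_s}
= C_s^T R_s(x)|_{X_s}
= (A_sf)(x),
\]
which uses only the \emph{definition} $C_s={B^s}^{\dagger}f|_X$ and nothing about RKHS projections. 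In short, the missing idea is to solve the normal equations through the same $B^s$-based pseudoinverse as the algorithm, so that $\hat M_s(x)$ and $C_s$ share the factor ${B^s}^{\dagger}$ and the identification is purely algebraic.
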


\begin{proof}

We begin by expressing the error functional norm
\begin{align*}
||\varepsilon^s_x||^2_{\mathcal{H}_s}  &= < \delta^s_x - M_s(x)^T \delta^s_X,  \delta^s_x - M_s(x)^T \delta^s_X>_{\mathcal{H}_s}\\
&= ||\delta^s_x||^2_{{\mathcal{H}}_s} - 2M_s(x)^T \delta^s_X\delta^s_x + M_s(x)^T \delta^s_X{\delta^s}^T_XM_s(x)
\end{align*}
Now, based on the property of dual space, we know at scale s, 
\[
<\delta^s_a,\delta^s_b>_{\mathcal{H}^{*}_s}  = K^s(a,b) 
\]
Also, let
\[R_s(x) = \delta^s_X \delta^s_x = (K^s(x,x_1), K^s(x,x_2),....,K^s(x,x_n))^T \in \R^{n}\]
\[G_s = \delta^s_X {\delta^s}^T_X  \quad \text{(Gaussian kernel at scale s)}\]
Therefore, we have
\begin{align}\label{err}
||\varepsilon^s_x||^2_{\mathcal{H}_s}  &= ||\delta_x||^2_{{\mathcal{H}}_s} - 2M_s(x)^T R_s(x) + M_s(x)^T G_s M_s(x)
\end{align}
Differentiation yields the optimal $\hat{M}_s(x)$ that minimizes $||\varepsilon^s_x||^2_{\mathcal{H}_s}$, as the the solution of 
\begin{equation}\label{solve}
G_s M_s(x) = R_s(x)
\end{equation}
From Algorithm \ref{multiscale_algo}, we know $G_s$ need not be full rank. Using matrix A (as in Algorithm \ref{multiscale_algo}), constructing $W ( = AG_s)$ and a Column pivoted QR decomposition 
$WP_s = QR$,
%
and, applying the permutation operator $P_s$ on equation (\ref{solve}) yields
\begin{equation}\label{solve2}
P_s R_s(x) = P_s G_s M_s(x)
\end{equation}
Since $G_s$ is a symmetric operator, sampling the first $l_s$ rows of $P_s G_s$, where $l_s$ represents the numerical rank of $G_s$, will produce ${B^s}^T$, the transpose of the bases considered at scale $s$.
Correspondingly, sampling the respective values in $R_s(x)$ produces the restriction of $R_s(x)$ to set $X_s$ (represented as $R_s(x)|_{X_s}$). Therefore, restricting system (\ref{solve2}) to equations only corresponding to $X_s$
\begin{equation}
R_s(x)|_{X_s} = {B^s}^T M_s(x)
\end{equation}
where we use the Moore-Penrose inverse for getting the optimal projection of $R_s(x)|_{X_s}$. Thus, optimal solution for $M_s(x)$ is given by
\begin{align}
\hat{M}_s(x) &= {B^{s}} ({B^{s}}^T B^{s})^{-1} R_s(x)|_{X_s}
={{B^s}^{\dagger} }^TR_s(x)|_{X_s}\label{opt}
\end{align}
Now computing the inner product $<f|_X, \hat{M}_s(x)>$
\begin{align*}
<f|_X, \hat{M}_s(x)> &= <f|_X,{{B^s}^{\dagger} }^TR_s(x)|_{X_s} > 
= <{B^s}^{\dagger}f|_X,R_s(x)|_{X_s} >\\
&= \sum_{x_j \in X_s}c_j K(x,x_j)
= (A_sf)(x)
\end{align*}
\end{proof}
Now, we  provide approximation results for optimal point evaluational functional. For the approximation at each scale, we can  obtain the variance associated with that prediction by minimizing the squared error in evaluation at $x$ in the native reconstruction space. However, since there is no noise or uncertainty associated with the data, this variance will  be zero.
\begin{theorem}
For observed data $f|_X = (f(x_1),f(x_2),...f(x_n))^T \in \R^n$ on $X \subset \Omega$, there exists at each scale, a point approximation functional ($\psi^s_x$) for the optimal approximation $A_sf$ such that $\psi^s_x(f) = (A_sf)(x)$. This functional has a representation
\begin{equation}\label{delsx}
\psi^s_x = R^T_s(x)|_{X_s} {B^{s}}^{\dagger} \delta_X
\end{equation}
Also, it produces the minimized error functional in the sense
\begin{equation}\label{varr}
\min ||\varepsilon^s_x||^2_{\mathcal{H}_s}  = \min_{\gamma^s_x} ||\delta^s_x - \gamma^s_x||^2_{{\mathcal{H}}_s} \quad \text{       with optimal: }{\gamma^s_x} = \psi^s_x
\end{equation}
\end{theorem}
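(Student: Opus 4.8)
The plan is to obtain $\psi^s_x$ by isolating the ``predictor'' part of the optimal error functional already characterized in Theorem \ref{errfunc}. Recall that there the error functional is $\varepsilon^s_x = \delta^s_x - M_s(x)^T \delta^s_X$ and the minimizer of $\|\varepsilon^s_x\|^2_{\mathcal{H}_s}$ was shown to be $\hat M_s(x) = {{B^s}^{\dagger}}^T R_s(x)|_{X_s}$. So I would simply define $\psi^s_x := \hat M_s(x)^T \delta^s_X$, the combination of evaluation functionals that the optimal error functional subtracts from $\delta^s_x$; then $\varepsilon^s_x = \delta^s_x - \psi^s_x$ at the optimum, which is already the content of the second half of the statement once the minimization is justified, and it exhibits a functional with $\psi^s_x(f) = (A_sf)(x)$, proving existence.

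The representation (\ref{delsx}) is then a transpose computation: since $\big({{B^s}^{\dagger}}^T\big)^T = {B^s}^{\dagger}$, we get $\hat M_s(x)^T = R^T_s(x)|_{X_s}\,{B^s}^{\dagger}$, hence $\psi^s_x = R^T_s(x)|_{X_s}\,{B^s}^{\dagger}\delta_X$. To verify $\psi^s_x(f) = (A_sf)(x)$, apply $\psi^s_x$ to $f$, so $\psi^s_x(f) = R^T_s(x)|_{X_s}\,{B^s}^{\dagger} f|_X$; since this is a scalar it equals $\langle f|_X,\, {{B^s}^{\dagger}}^T R_s(x)|_{X_s}\rangle = \langle f|_X, \hat M_s(x)\rangle$, and the closing inner-product computation in the proof of Theorem \ref{errfunc} identifies this with $(A_sf)(x)$ (equivalently $\sum_{x_j\in X_s} c_j K^s(x,x_j)$).

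For the minimization statement (\ref{varr}), I would note that as $\gamma^s_x$ runs over functionals of the form $M_s(x)^T\delta^s_X$, $M_s(x)\in\R^n$, we have identically $\|\delta^s_x - \gamma^s_x\|^2_{\mathcal{H}_s} = \|\varepsilon^s_x\|^2_{\mathcal{H}_s}$, so the two minima coincide; minimizing the right-hand quadratic (the same expansion (\ref{err})) over $M_s(x)$ produces the normal equations (\ref{solve}), whose minimum-norm solution, obtained via the column-pivoted-QR reduction to $X_s$ exactly as in Theorem \ref{errfunc}, is $\hat M_s(x)$, so the optimal $\gamma^s_x$ equals $\hat M_s(x)^T\delta^s_X = \psi^s_x$. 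In line with the paragraph preceding the theorem I would also remark that $\min\|\varepsilon^s_x\|^2_{\mathcal{H}_s}$ plays the role of the predictive variance, which collapses here because the data are noise-free and (\ref{solve}) is a consistent system.

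The main obstacle is bookkeeping rather than analysis: keeping the transposes straight between $\hat M_s(x)$, ${B^s}^{\dagger}$ and $\delta_X$, and — more substantively — being precise about the feasible set over which $\gamma^s_x$ is minimized when $G_s$ is rank deficient, so that the passage from the $n$ evaluation functionals at $X$ to the $l_s$ functionals at $X_s$ (via the permutation $P_s$ and the Moore--Penrose inverse, as in Theorem \ref{errfunc}) is what makes $\hat M_s(x)$ well defined and the minimizing $\gamma^s_x$ the unique minimum-norm choice on the relevant subspace.
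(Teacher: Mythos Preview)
Your proposal is correct and follows essentially the same route as the paper: define $\psi^s_x = \hat M_s(x)^T\delta_X$ using the optimal $\hat M_s(x)$ from Theorem~\ref{errfunc}, read off the representation (\ref{delsx}) by transposition, invoke the inner-product identity $\langle f|_X,\hat M_s(x)\rangle = (A_sf)(x)$, and obtain (\ref{varr}) by substituting the optimal $M_s(x)$ back into the quadratic expansion (\ref{err}). The paper's proof differs only cosmetically, writing out the substituted quadratic explicitly and then recognizing it as $\|\delta^s_x-\psi^s_x\|^2_{\mathcal{H}_s}$, whereas you argue the two minimizations coincide by identifying the feasible set; your extra care about the domain of $\gamma^s_x$ and the rank-deficient case is welcome but not strictly needed beyond what Theorem~\ref{errfunc} already established.
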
 

\begin{proof}
We begin with the expression for the norm of the error functional in equation (\ref{err})
\[
||\varepsilon^s_x||^2_{\mathcal{H}_s} = ||\delta^s_x||^2_{{\mathcal{H}}_s} - 2M_s(x)^T R_s(x) + M_s(x)^T G_s M_s(x)
\]

since, it is minimized when $M_s(x)$ is given by equation (\ref{opt}). Now, 
writing the optimal approximation functional at x as:
\[ \psi^s_x(f) =  A_sf(x) = M^T_s(x) \delta_X (f) = R_s^T(x)|_{X_s} {B^s}^{\dagger} \delta_{X}(f) \]

which completes the proof for equation (\ref{delsx}). Now, considering the error functional at x and putting the optimal $M_s(x)$ in equation (\ref{err}), we get
\[
\min ||\varepsilon^s_x||^2_{\mathcal{H}_s} = ||\delta^s_x||^2_{{\mathcal{H}}_s} - 2 R_s^T(x)|_{X_s} {B^s}^{\dagger} R_s(x) + R_s^T(x)|_{X_s} {B^s}^{\dagger} G_s {{B^s}^{\dagger}}^T R_s(x)|_{X_s} 
\]

Recognizing $\psi^s_x$ from equation (\ref{delsx}) and substituting in the above equation.

\[
\min ||\varepsilon^s_x||^2_{\mathcal{H}_s} = ||\delta^s_x||^2_{{\mathcal{H}}_s} - 2 <\psi^s_x,\delta^s_x> + ||\psi^s_x||^2_{{\mathcal{H}}_s} = ||\delta^s_x - \psi^s_x||^2_{{\mathcal{H}}_s}
\]

Which completes the proof for equation (\ref{varr}).
\end{proof}

Before moving forward, here we also provide a result for bounding the Error functional.

\begin{corollary}
For the error functional defined in Theorem \ref{errfunc}, the absolute error at x is  bounded in the sense
\begin{equation}\label{ubound}
|\varepsilon^{s}_x(f)| \leq \sqrt{1 - M_s(x)^TR_s(x)} ||f||_{\mathcal{H}_s}
\end{equation}
\end{corollary}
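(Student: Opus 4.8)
The plan is to combine the Riesz representation of point functionals with the closed form for the minimized error-functional norm already derived just above this corollary.

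First I would observe that, by the reproducing property, $\varepsilon^s_x(f) = \langle \varepsilon^s_x, f\rangle_{\mathcal{H}_s}$, so a single application of Cauchy--Schwarz gives
\[
|\varepsilon^s_x(f)| \le \|\varepsilon^s_x\|_{\mathcal{H}_s}\,\|f\|_{\mathcal{H}_s}.
\]
Hence the corollary reduces to showing that, for the optimal choice $M_s(x)=\hat M_s(x)$ of Theorem \ref{errfunc}, $\|\varepsilon^s_x\|^2_{\mathcal{H}_s} = 1 - M_s(x)^T R_s(x)$.

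To get that identity I would start from equation (\ref{err}), namely $\|\varepsilon^s_x\|^2_{\mathcal{H}_s} = \|\delta^s_x\|^2_{\mathcal{H}_s} - 2M_s(x)^T R_s(x) + M_s(x)^T G_s M_s(x)$, and use two facts. First, for the (unit-diagonal) Gaussian kernel one has $\|\delta^s_x\|^2_{\mathcal{H}_s} = K^s(x,x) = 1$. Second, at $M_s(x) = \hat M_s(x)$ the functional $\psi^s_x$ of equation (\ref{varr}) is the orthogonal projection of $\delta^s_x$ onto $\mathrm{span}\{\delta^s_{x_j}: x_j \in X_s\}$, so $\langle \delta^s_x - \psi^s_x, \psi^s_x\rangle_{\mathcal{H}_s} = 0$; since $\|\psi^s_x\|^2_{\mathcal{H}_s} = \hat M_s(x)^T G_s \hat M_s(x)$ and $\langle \psi^s_x, \delta^s_x\rangle_{\mathcal{H}_s} = \hat M_s(x)^T R_s(x)$, this orthogonality says precisely that the quadratic term equals the cross term, $\hat M_s(x)^T G_s \hat M_s(x) = \hat M_s(x)^T R_s(x)$. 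Plugging both facts into (\ref{err}) collapses it to $\min\|\varepsilon^s_x\|^2_{\mathcal{H}_s} = 1 - \hat M_s(x)^T R_s(x)$, and inserting this into the Cauchy--Schwarz bound yields (\ref{ubound}).

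The main obstacle is the middle step: pinning down the cross term $M_s(x)^T G_s M_s(x)$ at the optimum. It relies on the orthogonal-projection characterization of $\psi^s_x$ from the previous theorem (equation (\ref{varr})) and on handling the rank-deficient $G_s$ through $B^s$ and ${B^s}^\dagger$ rather than inverting $G_s$, as in equation (\ref{opt}); once the normalization $K^s(x,x)=1$ is used to fix the constant, the rest is routine.
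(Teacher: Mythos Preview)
Your proposal is correct and follows essentially the same route as the paper: apply Cauchy--Schwarz to get $|\varepsilon^s_x(f)| \le \|\varepsilon^s_x\|_{\mathcal{H}_s}\|f\|_{\mathcal{H}_s}$, then evaluate $\min\|\varepsilon^s_x\|^2_{\mathcal{H}_s}$ from equation (\ref{err}) using $\|\delta^s_x\|^2_{\mathcal{H}_s}=1$ and the collapse of the quadratic term at the optimum. The only cosmetic difference is that the paper obtains $M_s(x)^T G_s M_s(x) = M_s(x)^T R_s(x)$ by substituting the normal equation (\ref{solve}) directly, whereas you derive the same identity from the projection orthogonality $\langle \delta^s_x - \psi^s_x,\psi^s_x\rangle_{\mathcal{H}_s}=0$; these are equivalent, and your phrasing has the mild advantage of sidestepping any worry about invertibility of $G_s$.
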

\begin{proof}
Using the relation,
\[
|\varepsilon^{s}_x(f)| \leq ||\varepsilon^{s}_x||_{\mathcal{H}_s} ||f ||_{\mathcal{H}_s}
\]
again using equation (\ref{err})
\[
||\varepsilon^s_x||^2_{\mathcal{H}_s} = ||\delta^s_x||^2_{{\mathcal{H}}_s} - 2M_s(x)^T R_s(x) + M_s(x)^T G_s M_s(x)
\]
Now, since (\ref{err}) is minimized when we satisfy (\ref{solve}). Also since, $||\delta^s_x||^2_{{\mathcal{H}}_s} =1$.  Putting the values in (\ref{err}) we get 
\[
\min ||\varepsilon^s_x||^2_\mathcal{H} = 1 - M_s(x)^T R_s(x)
\]
Therefore result in equation (\ref{ubound}) follows.
\end{proof}

\subsection{Confidence and Prediction Intervals}
Confidence intervals in general are a measure of our belief in the estimated approximation. 
Prediction intervals on the other hand refer to the bounds  around the mean fit, where a future datapoint is expected to fall. This is crucial information in conjunction with the sparse representation, as even when we are not able to capture the function accurately at initial scales, we have an estimate of the expected behavior of the observations.
Algorithm \ref{multiscale_algo} makes predictions at each scale based on the corresponding sparse representation ($D^s_{sparse}$). Here if the error in approximation is greater than user defined tolerance, i.e.
%
$ ||f|_X -(A_sf)|_X||_2 \geq TOL  $
 , then it signifies that Approximation $A_sf$ needs additional degrees of freedom to capture the underlying data generation process (as in principle it is the best possible approximation, given a  basis set). 
 For modeling this error value, we consider a model  of form
\begin{equation}
f(x_i) = (A_sf)(x_i) +\epsilon^s_i \quad \text{ with } x_i \in X \text{ and } \epsilon^s_i \sim N(0,\sigma^2_s I)
\end{equation}
Therefore the sampling distribution for $f|_X$ would be given as
\[
p\Big(f|_X\Big|(A_sf)|_X,\epsilon^s \Big) \sim N(B^sC_s, \sigma^2_s I)
\]
Since, we know that the projection coordinate is given as $
C_s = ({B^s}^TB^s)^{-1}{B^s}^T (f|_X) $,
%
\begin{align*}
Cov(C_s) &= ({B^s}^TB^s)^{-1}{B^s}^T Cov(f|_X) B^s ({B^s}^TB^s)^{-1} = \sigma^2_s ({B^s}^TB^s)^{-1} \text{ and, }
\end{align*}
\begin{align*}
\mathbb{E}[C_s] =  ({B^s}^TB^s)^{-1}{B^s}^T \mathbb{E}[f|_X] = ({B^s}^TB^s)^{-1}{B^s}^TB^sC_s = C_s
\end{align*}
which shows an unbiased estimator. Hence $C_s$ is the best unbiased approximation at scale s given bases $B^s$. Now for computing the distribution of response $f$ at some $x_* \in \Omega$
\[
\mathbb{E}[A_sf(x_*)] = B^s_* \mathbb{E}[C_s] = A_sf(x_*) 
\]
\[
Var(A_sf(x_*)) = B^s_* Cov(C_s){B^s_*}^T = \sigma^2_s B^s_* ({B^s}^TB^s)^{-1} {B^s_*}^T
\]
For an estimated value of $\sigma$ ($\hat{\sigma}$), we can write the standard deviation of prediction at $x_*$ 
\[
\widehat{stdev}(A_sf(x_*)) = \hat{\sigma_s} \sqrt{ B^s_* ({B^s}^TB^s)^{-1} {B^s_*}^T}
\]
Therefore if we use t-distribution for confidence bounds, we get the following $100(1 - \alpha)\%$ confidence intervals for $E[A_sf(x_*)]$ 
\begin{equation}
(A_sf)(x_*) \pm \widehat{stdev}(A_sf(x_*))  \cdot t\Big( 1 - \frac{\alpha}{2}, n-l_s \Big)
\end{equation}
Here $l_s$ is again the numerical rank of $G_s$ and $n$ is the original number of observations made.
Now for prediction interval, we know, $
var(f(x_*) - (A_sf)(x_*)) = var(\epsilon^s) + var(A_sf(x_*))
$,
therefore,
\[
\widehat{stdev}(f(x_*) - (A_sf)(x_*)) = \hat{\sigma_s}\sqrt{1 + B^s_* ({B^s}^TB^s)^{-1} {B^s_*}^T}
\]
Hence the $100(1 - \alpha)\%$ prediction intervals are given as

\begin{equation}
(A_sf)(x_*) \pm \widehat{stdev}(f(x_*) - (A_sf)(x_*))  \cdot t\Big( 1 - \frac{\alpha}{2}, n-l_s \Big)
\end{equation}
For the estimated value of $\sigma^2$, we use its unbiased estimation at scale s given as
\begin{equation}\label{var}
\hat{\sigma}^2 = \frac{||f|_X - A_sf|_X||_2^2}{n-l_s}
\end{equation}

\section{Stability Properties}

In this section we provide bounds related to stability of approximations obtained by the proposed algorithm. The first result bounds the approximation at scale $s$ with respect to the $L_{\infty}$ topology for some compact domain $\Omega \in \R^d$.
\begin{theorem}\label{stability}
For observed data $f|_X = (f(x_1),f(x_2),...f(x_n))^T \in \R^n$ on $X \in \Omega$, the approximation   $A_{s}f$ at any scale s is bounded in the $L_{\infty}(\Omega)$ norm as 
\begin{equation}
||A_sf||_{L_{\infty}(\Omega) } \leq P^s_{\infty} ||f||_{L_{\infty}(\Omega) }  \text{, where }
\end{equation}
\begin{equation}
{(\sigma_{smax}({B^s}^T))}^{-1} \leq P^s_{\infty}\leq \sum_{j = 1}^n \sqrt{D_s(j,j)}
\end{equation}
where $\sigma_{smax}$ denotes the largest singular value and  $D_s$ is  obtained by applying the extension operator ${{B^s}^{\dagger}}^T$ on $G_{X_s} = R_s(x)|_{X_s} R_s(x)|^T_{X_s}$, i.e.
$D_s = {{B^s}^{\dagger}}^T G_{X_s} {B^s}^{\dagger}$

\end{theorem}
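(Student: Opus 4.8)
The plan is to reduce the whole statement to the pointwise representation of $A_s f$ obtained in Theorem~\ref{errfunc}. There one has $(A_s f)(x) = \langle f|_X, \hat M_s(x)\rangle$ with $\hat M_s(x) = {{B^s}^{\dagger}}^T R_s(x)|_{X_s}$ (equations~(\ref{inner}) and~(\ref{opt})). I would prove the upper bound first. By $\ell^1$--$\ell^\infty$ duality (H\"older's inequality),
\[
|(A_s f)(x)| = \bigl|\langle f|_X, \hat M_s(x)\rangle\bigr| \le \|f|_X\|_\infty\,\|\hat M_s(x)\|_1 \le \|f\|_{L_\infty(\Omega)}\,\|\hat M_s(x)\|_1 ,
\]
since restriction to $X$ cannot increase the sup-norm. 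The only thing left is to recognize that $\|\hat M_s(x)\|_1 = \sum_{j=1}^n \sqrt{D_s(j,j)}$: because $G_{X_s} = R_s(x)|_{X_s} R_s(x)|_{X_s}^T$ is rank one, $D_s = {{B^s}^{\dagger}}^T G_{X_s} {B^s}^{\dagger} = \hat M_s(x)\,\hat M_s(x)^T$, so $D_s(j,j) = \bigl(\hat M_s^j(x)\bigr)^2$ and $\sum_j \sqrt{D_s(j,j)} = \sum_j |\hat M_s^j(x)| = \|\hat M_s(x)\|_1$. Taking the supremum over $x \in \Omega$ turns this into a legitimate value of $P^s_{\infty}$ and yields the claimed upper estimate.

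For the lower bound, the point is that $P^s_{\infty}$ must dominate the $L_\infty(\Omega)\to L_\infty(\Omega)$ operator norm of $A_s$, so it is enough to exhibit one function $f$ forcing $\|A_s f\|_{L_\infty(\Omega)}/\|f\|_{L_\infty(\Omega)} \ge \sigma_{smax}({B^s}^T)^{-1}$. I would evaluate at a sparse node $x_k \in X_s$: there $R_s(x_k)|_{X_s}$ is a column of the kernel matrix on $X_s$ whose $k$-th entry is $K^s(x_k,x_k)=1$, hence $\|R_s(x_k)|_{X_s}\|_2 \ge 1$. Since $B^s$ collects exactly the $l_s$ numerically independent columns of $G_s$, it has full column rank, so ${{B^s}^{\dagger}}^T$ is injective with $\sigma_{\min}({{B^s}^{\dagger}}^T) = \sigma_{smax}(B^s)^{-1}$; therefore
\[
\|\hat M_s(x_k)\|_1 \ge \|\hat M_s(x_k)\|_2 \ge \sigma_{smax}(B^s)^{-1}\,\|R_s(x_k)|_{X_s}\|_2 \ge \sigma_{smax}({B^s}^T)^{-1}.
\]
Choosing $f$ to realize the sign pattern $\mathrm{sgn}\bigl(\hat M_s(x_k)\bigr)$ on $X$ by a continuous function on $\Omega$ with $\|f\|_{L_\infty(\Omega)} = 1$ then gives $|(A_s f)(x_k)| = \|\hat M_s(x_k)\|_1 \ge \sigma_{smax}({B^s}^T)^{-1}$, so no admissible $P^s_{\infty}$ can be smaller.

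I expect the lower bound to be the main obstacle. One must be careful that $P^s_{\infty}$ is only constrained as a constant for the map between $L_\infty$ spaces, so the test function has to be built on the continuum $\Omega$ while keeping its sup-norm equal to its maximum modulus on the finite set $X$; and one has to pass from a $2$-norm singular-value estimate on ${{B^s}^{\dagger}}^T$ to an $\infty$-norm conclusion, which is precisely where $\|\cdot\|_1 \ge \|\cdot\|_2$ and the unit diagonal $K^s(x_k,x_k)=1$ of the Gaussian kernel at the sparse nodes are used. The remaining steps --- the H\"older bound and the rank-one collapse $D_s = \hat M_s(x)\hat M_s(x)^T$ --- are routine once the representation of Theorem~\ref{errfunc} is available, and I would present the upper bound before the lower bound as above.
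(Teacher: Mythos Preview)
Your lower-bound route coincides with the paper's: both drop from $\|\cdot\|_1$ to $\|\cdot\|_2$, use $\hat M_s(x)={{B^s}^\dagger}^T R_s(x)|_{X_s}$ together with the singular-value relation $\sigma_{\min}({{B^s}^\dagger}^T)=\sigma_{smax}({B^s}^T)^{-1}$, and invoke $\|R_s(x_k)|_{X_s}\|_2\ge 1$ at a sparse node. The only presentational difference is that the paper \emph{defines} $P^s_\infty:=\max_{x\in\Omega}\sum_{j}|M^j_s(x)|$ inside the proof (the Lebesgue constant of the scheme) and then bounds that quantity directly, so your test-function construction is not needed --- though it is harmless.

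The upper bound is where you and the paper diverge. The paper does \emph{not} use your rank-one collapse. Instead it regards each $M^j_s$ as a \emph{function} in the RKHS $\mathcal H_s$ and applies the reproducing-kernel Cauchy--Schwarz inequality
\[
|M^j_s(x^*)|=|\delta^s_{x^*}(M^j_s)|\le\|\delta^s_{x^*}\|_{\mathcal H_s}\,\|M^j_s\|_{\mathcal H_s}=\|M^j_s\|_{\mathcal H_s},
\]
and then identifies $\|M^j_s\|_{\mathcal H_s}^2=e_j^T{{B^s}^\dagger}^T G_{X_s}{B^s}^\dagger e_j=D_s(j,j)$. For that computation to actually yield the RKHS norm, $G_{X_s}$ has to be read as the kernel Gram matrix on $X_s$ (so $D_s$ is $x$-independent and the upper bound is a genuine, nontrivial estimate), not the rank-one outer product the statement literally writes. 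Under the paper's reading your shortcut $\sqrt{D_s(j,j)}=|\hat M_s^j(x)|$ fails and the RKHS Cauchy--Schwarz step is the substantive ingredient you are missing; under your literal reading the ``upper bound'' collapses to an exact equality with the Lebesgue constant and becomes tautological. Either way, the paper's argument for the upper estimate is the RKHS one, and that is the main methodological difference from your proposal.
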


\begin{proof}
Expressing  $(A_sf)$ in terms of the inner product as in equation (\ref{inner}). 

\begin{align*}
||A_sf||_{L_{\infty}{\Omega}} &= \max_{x \in \Omega}|A_sf(x)| =  \max_{x \in \Omega}\Big| \sum_{x_j \in X} f(x_j) M^j_s(x) \Big|\\
& \leq \max_{x \in \Omega} \sum_{x_j \in X}| f(x_j) M^j_s(x)|\quad  \text{(Triangle Inquality)} \\
& \leq \max_{x \in \Omega} \sum_{x_j \in X}| f(x_j)|\cdot | M^j_s(x)|\\
& \leq P^{s}_{\infty} \cdot ||f||_{L_{\infty}(\Omega)} \quad \text{where} \quad P^{s}_{\infty} = \max_{x \in \Omega} \sum_{j = 1}^n | M^j_s(x)|
\end{align*}
%
Let  $x^{*} \in \Omega$ be the data point at which $\sum_{j = 1}^n | M^j_s(x)|$ is maximized. Therefore 
\begin{align*}
P^s_{\infty} &= \sum_{j = 1}^n |M^j_s(x^{*})| 
= \sum_{j = 1}^n |\delta^s_{x^{*}}M^j_s| \quad  \text{(Point Evaluational functional)}\\
&\leq \sum_{j = 1}^n ||\delta^s_{x^{*}} ||_{{\mathcal{H}}_s} \cdot ||M^j_s||_{{\mathcal{H}}_s} =  \sum_{j = 1}^n ||M^j_s||_{{\mathcal{H}}_s} \quad \text{Since} \  ||\delta^s_{x^{*}} ||_{{\mathcal{H}}_s} = 1
\end{align*}
Now, we know
\begin{align}
<M^j_s,M^j_s>_{{\mathcal{H}}_s} &= e^T_j {{B^s}^{\dagger}}^T R_s(x)|_{X_s} \cdot R_s(x)|^T_{X_s} {B^s}^{\dagger} e_j \\
&= e^T_j {{B^s}^{\dagger}}^T G_{X_s}   {B^s}^{\dagger} e_j 
= e^T_j D_s e_j 
= D_s(j,j)\label{mjmj}
\end{align}
Therefore
\begin{equation}\label{lefteq}
P^s_{\infty} \leq \sum_{j = 1}^n ||M^j_s||_{{\mathcal{H}}_s} =  \sum_{j = 1}^n \sqrt{D_s(j,j)}
\end{equation}
For lower bound on $P^s_{\infty}$,
\begin{align}
P^s_{\infty} &=  \max_{x \in \Omega} || M_s(x)||_1 \geq \max_{x \in \Omega} || M_s(x)||_2 \quad (\text{since}\  ||\cdot||_1 \geq ||\cdot||_2 )\label{p}
\end{align}
For a tall thin matrix B, ${{B^{\dagger}}^T}^{\dagger} = B^T$. Therefore, $M_s(x)$ is the least squares solution of 
%
${B^s}^T M_s(x) = R_s(x)|_{X_s}$, and,
%
\begin{equation} \label{upper}
||{B^s}^T||_2 ||M_s(x)||_2 \geq ||R_s(x)|_{X_s}||_2
\end{equation}
\[
||M_s(x)||_2  \geq  {(\sigma_{smax}({B^s}^T))}^{-1}||R_s(x)|_{X_s}||_2
\]
where $\sigma_{smax}$ is the maximum singular value. Putting in equation (\ref{p}).
\begin{align*}
P^s_{\infty} & \geq \max_{x \in \Omega} {(\sigma_{smax}({B^s}^T))}^{-1}||R_s(x)|_{X_s}||_2 \\
& = {(\sigma_{smax}({B^s}^T))}^{-1} \max_{x \in \Omega} ||R_s(x)|_{X_s}||_2\\
& \geq  {(\sigma_{smax}({B^s}^T))}^{-1}   \quad (\text{Because}\   \max_{x \in \Omega} ||R_s(x)|_{X_s}||_2 > 1)
\end{align*}
Combining equation (\ref{lefteq}) and above result, the bounds on $P^s_{\infty}$ follow
\end{proof}
We will conclude this section by providing a bound on the approximation at any scale s at some point $x \in \Omega$
\begin{theorem}
For observed data $f|_X = (f(x_1),f(x_2),...f(x_n))^T \in \R^n$ on $X \in \Omega$, the absolute value of the approximation produced by the hierarchical algorithm $(A_{s}f)$ at scale $s \leq S_a < S_c$ at any point $x \in \Omega$ is  bounded as
\begin{equation}\label{absolute}
|(A_sf)(x)| \leq \sum_{j = 1}^n \sqrt{D_s(j,j)} \cdot ||f||_{{\mathcal{H}}_s}
\end{equation}
Where $D_s$ is defined as in  Theorem (\ref{stability}). Also if the convergence happens at the critical scale ($S_c$), then at convergence, bound (\ref{absolute}) can be simplified as
\begin{equation}
|(A_{S_c}f)(x)| \leq n \cdot \sigma_{max}(G^{-1}_{S_c})\cdot ||f||_{{\mathcal{H}}_s}
\end{equation}
Where $\sigma_{max}$ is the maximum eigenvalue operator
\end{theorem}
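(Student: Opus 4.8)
The plan is to obtain both inequalities from the cardinal-function representation of point evaluation already available in Theorem \ref{errfunc} and in the proof of Theorem \ref{stability}; the argument is the pointwise analogue of Theorem \ref{stability}, except that $f$ is now measured in the native norm $\|\cdot\|_{\mathcal{H}_s}$ rather than in $L_\infty(\Omega)$.

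For (\ref{absolute}) I would start from $(A_sf)(x) = \sum_{x_j \in X} f(x_j)\,M^j_s(x)$ with $M^j_s \in \Gamma^s \subseteq \mathcal{H}_s$ the $j$-th cardinal function $e_j^T {{B^s}^{\dagger}}^T R_s(\cdot)|_{X_s}$, apply the triangle inequality, and then estimate each summand by Cauchy--Schwarz in $\mathcal{H}_s$. Two facts make the Gaussian normalization do the work: by the reproducing property $|f(x_j)| = |\langle f, K^s(\cdot,x_j)\rangle_{\mathcal{H}_s}| \le \|f\|_{\mathcal{H}_s}$, since $\|K^s(\cdot,x_j)\|_{\mathcal{H}_s}^2 = K^s(x_j,x_j) = 1$; and $|M^j_s(x)| = |\langle M^j_s, \delta^s_x\rangle_{\mathcal{H}_s}| \le \|M^j_s\|_{\mathcal{H}_s}$, since $\|\delta^s_x\|_{\mathcal{H}_s} = 1$. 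Equation (\ref{mjmj}) in the proof of Theorem \ref{stability} already gives $\|M^j_s\|_{\mathcal{H}_s}^2 = D_s(j,j)$, so summing over $j$ yields (\ref{absolute}). (Equivalently, bound the dual norm $\|\psi^s_x\|_{\mathcal{H}^*_s} \le \sum_j |M^j_s(x)| \le \sum_j \sqrt{D_s(j,j)}$ and use $|(A_sf)(x)| = |\psi^s_x(f)| \le \|\psi^s_x\|_{\mathcal{H}^*_s}\|f\|_{\mathcal{H}_s}$.)

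For the specialization at $s = S_c$ I would use that a full-rank $G_{S_c}$ forces $l_{S_c} = n$ and $X_{S_c} = X$, so the pivoted-QR step retains all columns and $B^{S_c}$ is a column permutation of the invertible matrix $G_{S_c}$, giving ${B^{S_c}}^{\dagger} = (B^{S_c})^{-1}$. Substituting into $D_{S_c} = {{B^{S_c}}^{\dagger}}^T G_{X_{S_c}}\, {B^{S_c}}^{\dagger}$, the permutations cancel and symmetry of $G_{S_c}$ collapses this to $D_{S_c} = G_{S_c}^{-1}$, hence $D_{S_c}(j,j) = e_j^T G_{S_c}^{-1} e_j \le \sigma_{\max}(G_{S_c}^{-1})$. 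The remaining step is the observation that $\mathrm{tr}(G_{S_c}) = n$ (unit diagonal) forces $\sigma_{\min}(G_{S_c}) \le 1$, hence $\sigma_{\max}(G_{S_c}^{-1}) \ge 1$; this upgrades $\sqrt{D_{S_c}(j,j)} \le \sqrt{\sigma_{\max}(G_{S_c}^{-1})}$ to $\sqrt{D_{S_c}(j,j)} \le \sigma_{\max}(G_{S_c}^{-1})$, and summing $n$ such terms into (\ref{absolute}) gives $|(A_{S_c}f)(x)| \le n\,\sigma_{\max}(G_{S_c}^{-1})\,\|f\|_{\mathcal{H}_{S_c}}$.

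I expect the routine pieces to be the two Cauchy--Schwarz estimates and the diagonal-versus-largest-eigenvalue inequality. The one place that needs genuine care — the main obstacle — is the bookkeeping at the critical scale: confirming that $X_{S_c} = X$ and that $B^{S_c}$ differs from $G_{S_c}$ only by a permutation so that $D_{S_c}$ really collapses to $G_{S_c}^{-1}$, and then noticing the trace identity that lets one replace $\sqrt{\sigma_{\max}(G_{S_c}^{-1})}$ by $\sigma_{\max}(G_{S_c}^{-1})$ in the final bound. Everything else is a direct reuse of Theorem \ref{errfunc} and the norm identity (\ref{mjmj}).
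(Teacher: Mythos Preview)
Your proposal is correct and follows essentially the same route as the paper for the first inequality: write $(A_sf)(x)=\sum_j f(x_j)M^j_s(x)$, apply the triangle inequality, bound $|f(x_j)|\le\|f\|_{\mathcal{H}_s}$ and $|M^j_s(x)|\le\|M^j_s\|_{\mathcal{H}_s}$ via the reproducing property and $\|\delta^s_x\|=1$, and invoke (\ref{mjmj}).

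For the specialization at $s=S_c$ there is a minor but genuine difference. Both arguments first reduce to $D_{S_c}=G_{S_c}^{-1}$. The paper then uses the \emph{pointwise} inequality $g(j,j):=(G_{S_c}^{-1})_{jj}\ge 1$ (true for any positive definite matrix with unit diagonal, via the Schur complement), so that $\sqrt{g(j,j)}\le g(j,j)$ termwise, and finishes with $\sum_j g(j,j)\le n\,\sigma_{\max}(G_{S_c}^{-1})$. You instead bound each diagonal entry globally by $\sigma_{\max}(G_{S_c}^{-1})$ and then use the trace identity $\mathrm{tr}(G_{S_c})=n$ to force $\sigma_{\min}(G_{S_c})\le 1$, hence $\sigma_{\max}(G_{S_c}^{-1})\ge 1$, which lets you drop the square root on the eigenvalue rather than on each diagonal entry. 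Your version needs only the weaker spectral fact and supplies its own justification, whereas the paper's pointwise claim $g(j,j)\ge 1$ is asserted without proof; both reach the same bound.
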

 \begin{figure*}\label{graph1}
\centering
\includegraphics[width=6in]{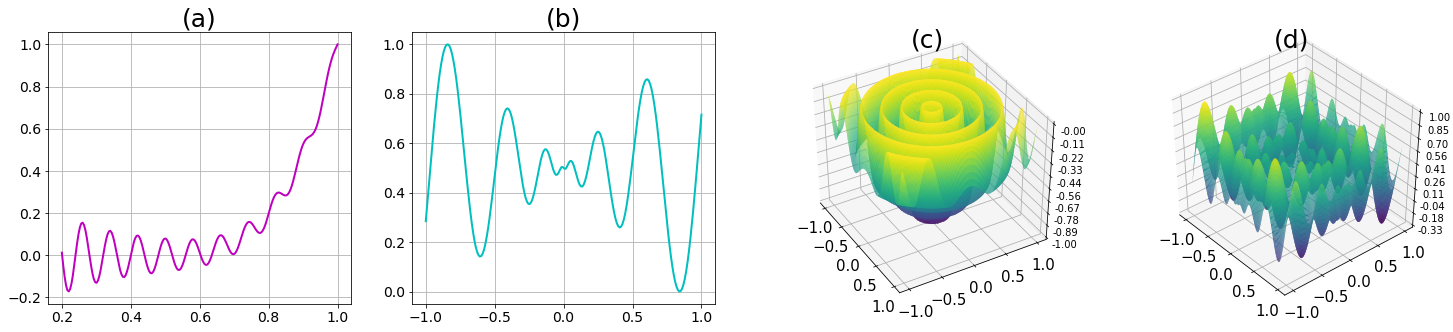}
\caption{Univariate (a: Gramancy and Lee function - Test function 1 (TF1); b: 1-D Schwefel function - TF2) and multivariante (c: Dropwave function - TF3; d: 2-D Schwefel function - TF4) test functions considered for studying the performance of Algorithm \ref{multiscale_algo}.}
\end{figure*}
\begin{proof}
Here again we begin with the absolute value of approximation produced by the proposed algorithm at scale s.
\begin{align*}
|(A_sf)(x)| & \leq \Big| \sum_{j = 1}^n f(x_j) M^j_s(x) \Big| \leq \sum_{i = 1}^n |f(x_j)||M^j_s(x)| = \sum_{j = 1}^n |\delta^s_{x_j}f||M^j_s(x)| \\
& \leq \sum_{j = 1}^n ||\delta^s_{x_j}||_{{\mathcal{H}}_s} ||f||_{{\mathcal{H}}_s} ||M^j_s||_{{\mathcal{H}}_s} = \sum_{j = 1}^n ||f||_{{\mathcal{H}}_s} ||M^j_s||_{{\mathcal{H}}_s} \quad \text{since: } (||\delta^s_{x_j}||_{{\mathcal{H}}_s} = 1)\\
& = \sum_{j = 1}^n \sqrt{D_s(j,j)} ||f||_{{\mathcal{H}}_s}
\end{align*}
The last step was carried out using equation (\ref{mjmj}). Now if the convergence happens at $s = S_c = S_a$, then $D_{S_c} = G_{S_c}^{-1}$ and therefore $D_{S_c}(j,j) = G_{S_c}^{-1}(j,j)$. Let $g(j,j) = G_{S_c}^{-1}(j,j)$
\begin{align*}
|(A_{S_c}f)(x)| & \leq \sum_{j = 1}^n \sqrt{g(j,j)} ||f||_{{\mathcal{H}}_s} \\
& \leq \sum_{j = 1}^n g(j,j) ||f||_{{\mathcal{H}}_s} \quad  \text{Since }g(j,j) \geq 1 \text{ for }1 \leq j \leq n\\
& \leq n \cdot \sigma_{max}(G^{-1}_{S_c}) ||f||_{{\mathcal{H}}_s}
\end{align*}
\end{proof}
\begin{remark}
We note that the bounds here are conservative and depend on the data set size $n$. The assumed global overlap of the basis functions leads to the loose upper bound. However, as Figure \ref{graph3} (in the next section) shows the basis functions have a rapid decay and attained bounds in practice are much smaller.
\end{remark}

\section{Results and Analysis}

This section analyzes the behavior of the proposed approach on a variety of datasets under different conditions. The first subsection here studies the performance on synthetic datasets. This is important, as here we know the truth and hence quantification of performance becomes feasible. The following subsections deal with application on real datasets. Here we take 2 different applications which test the performance of the proposed hierarchical algorithm.

 \begin{figure*}\label{graph2}
\centering
\includegraphics[width=5in]{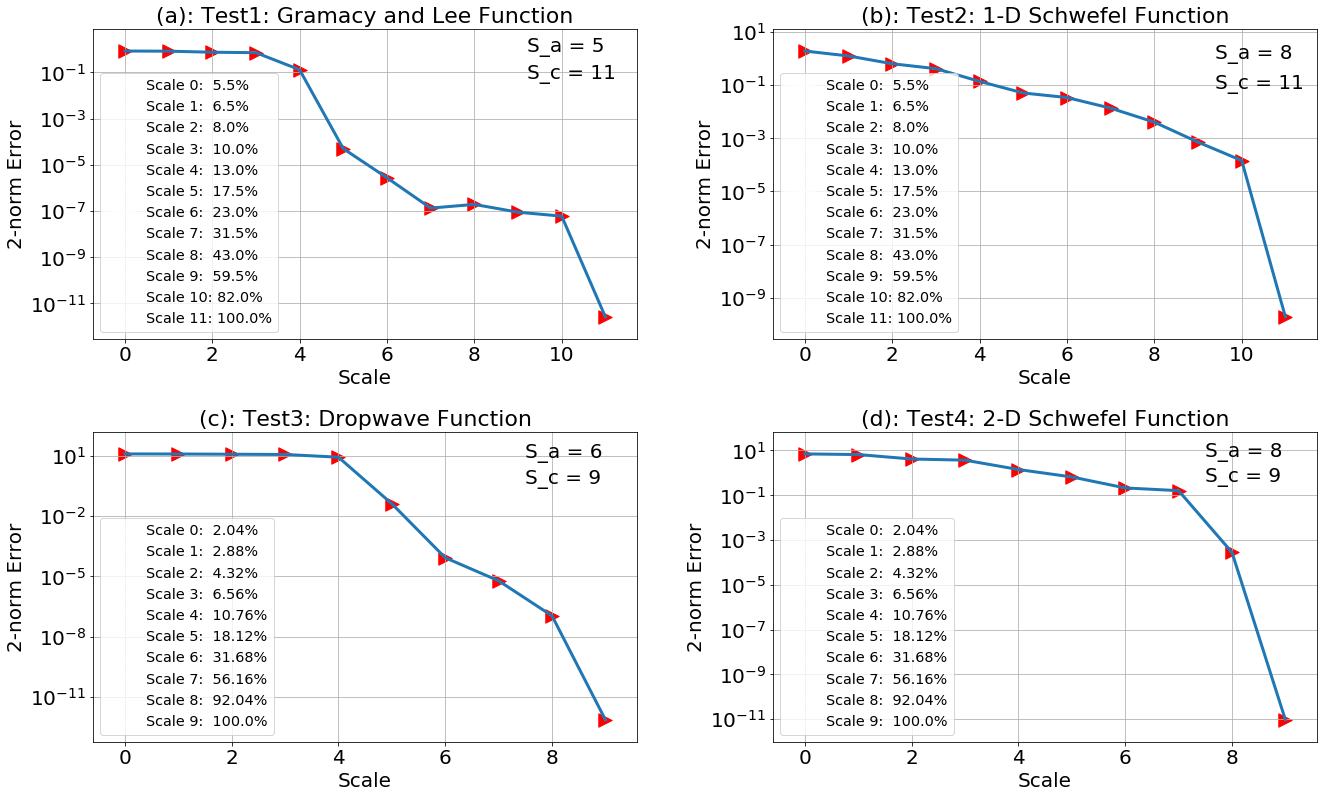}
\caption{Convergence behavior on the test functions measured in 2-norm prediction error on the observed data. Top row (a and b) shows the performance on univariate functions with bottom row for multivariate functions (c and d). 
Each of the plots  also show the Critical ($S_c$) and Convergence scale ($S_a$) along with the $\%$ of data sampled as the sparse representation  ($D^{s}_{sparse}$) at each scale.}
\end{figure*}

\subsection{Analysis on Synthetic Datasets}

Here we have chosen a set of 4 test functions (Figure \ref{graph1}) from literature \cite{simulationlib} providing our proposed algorithm, the sampled data to learn the underlying function. These test functions have been shown in Figure \ref{graph1}, and are mathematically expressed as:

\begin{itemize}
\item Test Function 1 (TF1): Gramacy and Lee Test function
\begin{equation}
f(x) = \frac{sin(10 \pi x)}{2x} + (x-1)^4 \quad \text{where } x \in [0.5, 2.5]
\end{equation}
\item Test Function 2 (TF2): 1-D Schwefel Function
\begin{equation}
f(x) = 418.9829 - x \cdot sin(\sqrt{|x|}) \quad \text{where } x \in [-500, 500]
\end{equation}
\item Test Function 3 (TF3): Dropwave Function
\begin{equation}
f(x,y) =-\frac{1+ cos(12 \sqrt{x^2 + y^2})}{0.5(x^2+y^2) + 2}\quad \text{where } x,y \in [-2, 2]
\end{equation}

\item Test Function 4 (TF4): 2-D Schwefel Function
\begin{equation}
f(x,y) = 837.9658 -  x \cdot sin(\sqrt{|x|}) - y \cdot sin(\sqrt{|y|})\quad \text{where } x,y \in [-500, 500]
\end{equation}

\end{itemize}
\begin{figure*}\label{graph5}
\centering
\includegraphics[width=6in]{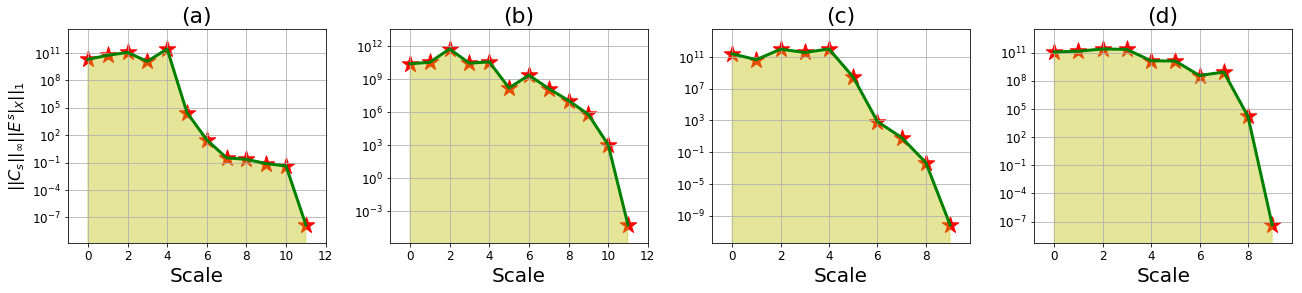}
\caption{Convergence measured as the decay of upper bound (equation (\ref{reseq})) to the inner product in the native Hilbert space between  approximation $A_sf$ and the approximation error $E^s$ for considered univariate (a: TF1 and b: TF2) and multivariate (c: TF3 and d: TF4) test functions. Here all 4 plots share common Y-axis label.}
\end{figure*}
 
After sampling the data from these test functions, for all axis (X and Y for TF1 and TF2, and X, Y and Z for TF3 and TF4 respectively) the values are normalized between -1 to 1 by dividing the measurements with the corresponding absolute maximum value along each axis. It should be noted here that for most of the analysis presented here, we have sampled 200 equidistant points for TF1 and TF2. And for TF3 and TF4, points are sampled on a $50 \times 50$ grid. The motivation here is to sample data points from the test functions and reconstruct the functions from the sampled data using the generated sparse representation ($D^s_{sparse}$). These functions were specifically chosen as they have a lot of curvature changes and multiple local minima and maxima, which makes learning the function form difficult. 

We begin with the analysis of the convergence of the algorithm on the test functions.
 \begin{figure*}\label{graph3}
\centering
\includegraphics[width=6in]{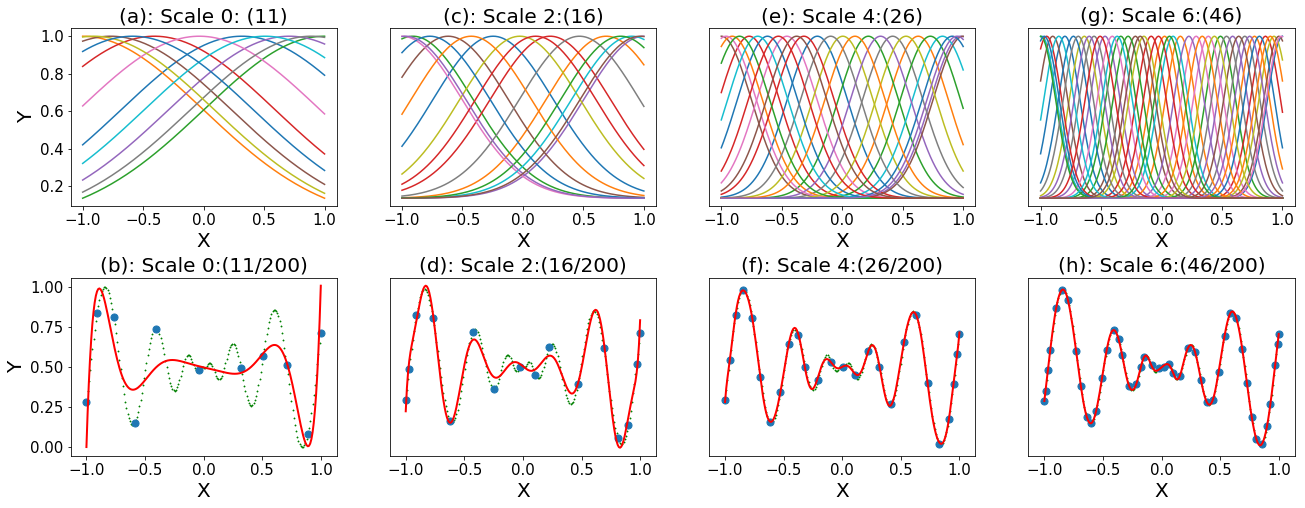}
\caption{Scalewise performance of Algorithm \ref{multiscale_algo} on Test function 2. The plots in the top row (a, c, e and g) show the density of basis functions at each scale, selected while identifying the sparse representation. The numbers in round bracket shows the cardinality of basis function set $B^s$. The bottom row (b, d, f and h) shows the corresponding scalewise reconstruction of the underlying function. The green curve  is the true function with blue points being the sparse representation and red curve is the reconstruction from the sparse representation}
\end{figure*}

\subsubsection{Convergence Behavior}
Figure \ref{graph2} shows the convergence behavior of the Algorithm \ref{multiscale_algo} by studying the $2$-$norm$ error of the prediction with respect to the original observations.  $TOL = 10^{-2}$ was used in Algorithm \ref{multiscale_algo} for generating these results. Following the notations used earlier, $S_a$ here represents the convergence scale with $S_c$  being the critical scale.  Figure \ref{graph2} also illustrates the proportion of dataset used at each scale $s$ for generating the approximation $A_sf$. It is essentially the proportion of the dataset used as the sparse representation.  Therefore, from figure \ref{graph2}  we can make inferences like, for test function 1, at scale 6 with 23\% of datapoints, the proposed algorithm was able to generate an approximation $A_6f$ which had a $2$-$norm$ error of less than $10^{-5}$. It should be noted here, that based on the  structure, the error measure reduces in a unique manner for all 4 test functions which have  different convergence scales. 

 \begin{figure*}\label{graph4}
\centering
\includegraphics[width=6in]{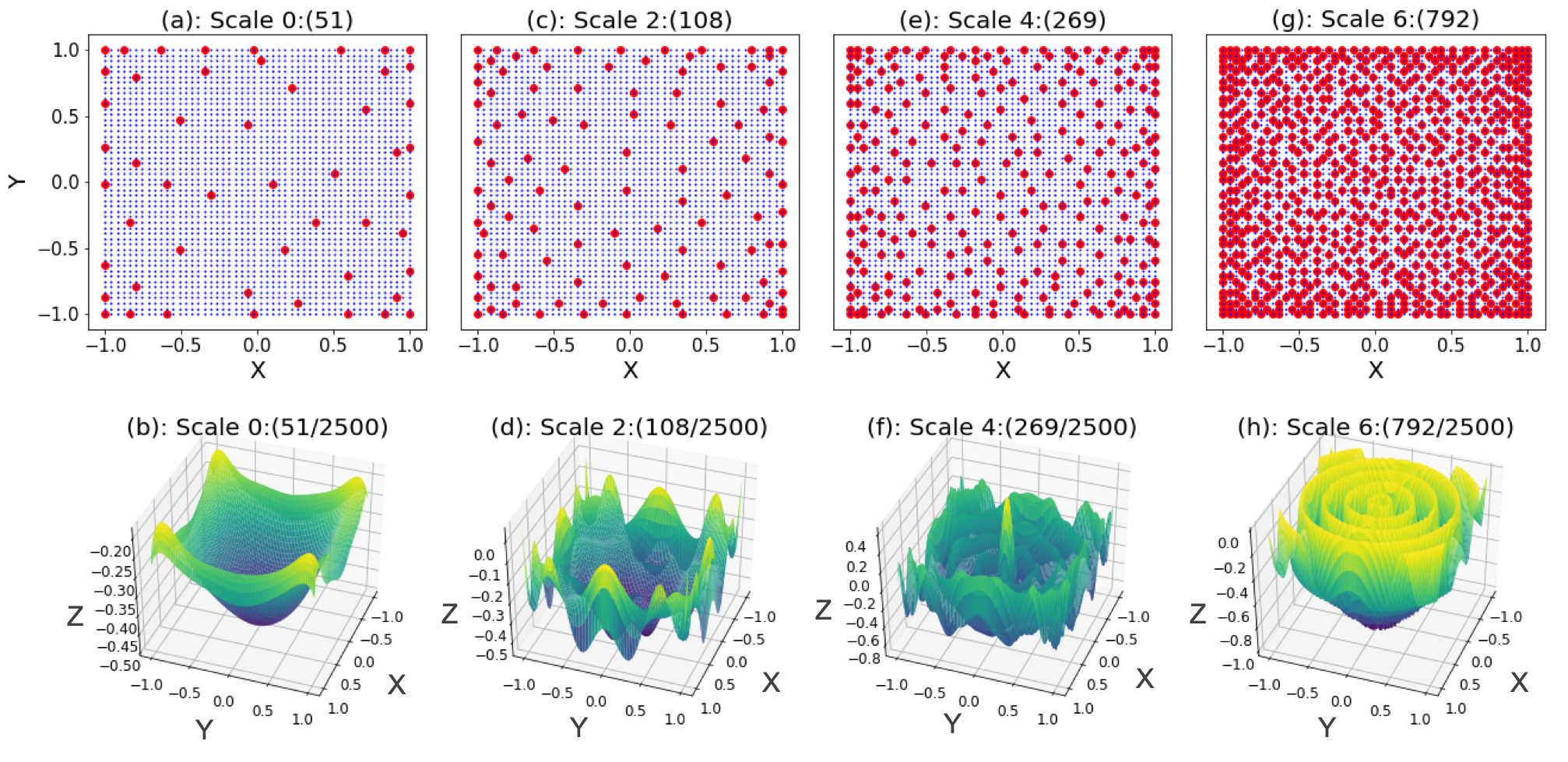}
\caption{Scalewise performance of Algorithm \ref{multiscale_algo} on Test function 3. Top row (a, c, e and g) shows the distribution of the sparse representation selected at multiple scales. Here we have shown the projection of the sparse representation on the X-Y plain for ease of presentation. The bottom row (b, d, f and h) shows the corresponding reconstruction for the dropwave test function from the respective $D^s_{sparse}$.}
\end{figure*}

Figure \ref{graph5} shows the convergence bounds from Theorem \ref{th2}. Here we have shown the results in $1$-$norm$, which state that at any scale s, in the Hilbert space
\begin{equation}
    \Big| \Big< (A_sf),E^s \Big>_{\mathcal{H}_s} \Big| \leq ||C_s||_{\infty} ||E^s|_X||_1
    \label{reseq}
\end{equation}
Figure \ref{graph5} plots the quantity on right in the above equation with increasing scales. The sharp drop in the this bound for all 4 test functions justifies the capacity of the algorithm to produce good approximations. Precisely, at higher scale in the native Hilbert space, the error in prediction approaches orthogonality with respect to the approximation $A_sf$.

\subsubsection{Scale dependent $D^s_{sparse}$ and corresponding reconstructions}

 \begin{figure*}\label{graph7}
\centering
\includegraphics[width=6in]{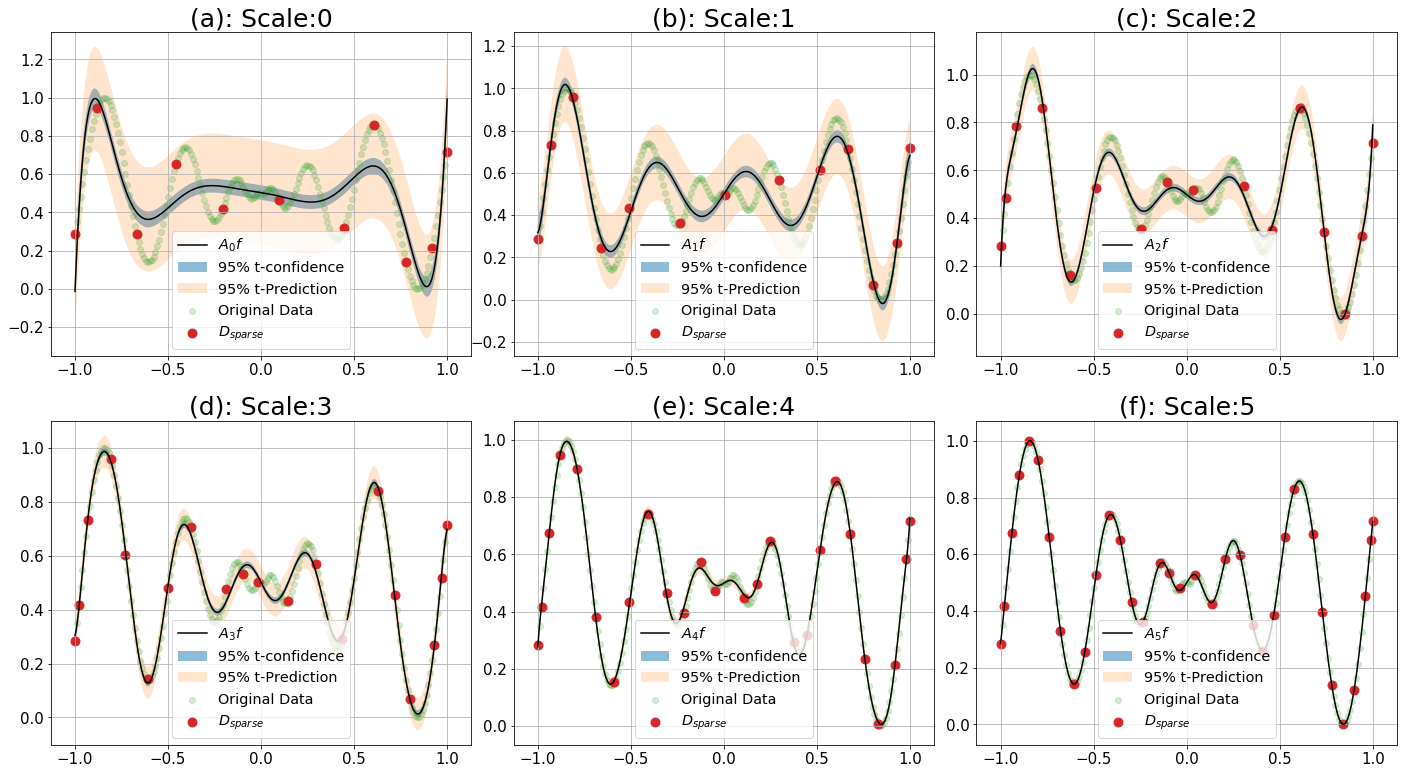}
\caption{Confidence and Prediction Intervals for reconstruction from $D^s_{sparse}$  from scale 0 to scale 5 for TF2. Along with the approximation produced at these scales (black curve), the  plots also show the sparse representation (red data points) selected with the $95\%$ t-confidence interval (thinner shaded region) and $95\%$t-prediction interval (broader shaded region).}
\end{figure*}

Figure \ref{graph3} and Figure \ref{graph4} show the behavior of the selected sparse representation and the approximation $A_sf$ produced with increasing scales. In this subsection we have only presented results for TF2 and TF3 for analyzing the performance of the algorithm in univariate and multivariate setting respectively. Starting with figure \ref{graph3}, the plots on the top row  show that with increasing scale, the support of the basis functions becomes narrower. This corresponds to  the increasing numerical rank of the kernel matrix $G_s$ with increasing $s$. The wider support of the basis functions in the initial scales also explains the corresponding over-smoothed approximations. This scenario is similar to behavior of approximation strategies with global bases (for example polynomial based approximation). In the bottom row, for every scale we have mentioned the number of points chosen as sparse representation (out of 200 points).  It should be noted here that the blue points represent the $D^s_{sparse}$ sampled from the smaller green data points. Here, our motivation is not just to show that with few points, the algorithm is able to learn the underlying function. But also, that the algorithm has an inbuilt capacity to choose a small set of representative points which can appropriately capture the function structure.

Figure \ref{graph4} shows the corresponding result for the 2-D wave functions. Since proper visualization of the basis functions for a surface is a little challenging, so here we have just shown the location of the points chosen in the sparse representation (in X-Y plane). One additional thing to be noted here is the higher density of sampling near the edges of the domain. This directly corresponds with the fact that at the edges, for matching the curvature appropriately, it needs more points as there is no scope of learning beyond the edges. The corresponding reconstruction also shows how the specific features are learned over the scales. Again, since points were sampled on a $50 \times 50$ grid. So $D^s_{sparse}$ consists of data points sampled from 2500 design points.

\subsubsection{Confidence and Prediction Intervals}

The results for this section have been shown in figure \ref{graph7}. The analysis is shown for scale 0 to scale 5. The green points are the original data and the red points show the sampled ones for $D^s_{sparse}$. The thinner (bluish) bands show the 95\% confidence interval on the estimated approximation at each scale. It should be noted here that, we have not carried out a full Bayesian analysis here. Instead, we have just used the fitting variance (equation \ref{var}) as a proxy for the variance and used it for scaling our interval. Specifically here we are using t-confidence bands which are suitable for smaller datasets (as compared to Gaussian bounds) and tend to be normal in the limit of larger datasets. The thicker band show the prediction interval. The main idea to be conveyed here is that if at a scale s, we have our sparse representation $D^s_{sparse}$, then along with these prediction intervals, we can make estimations of where all the deleted points and data points to be sampled in future would lie. One other way to say the same thing is, that if we are at a particular scale $s$, and if we fix the location on $x \in X$, then we can be 95\% confident that the mean of $y$ values observed at that particular x will lie within the blue bounds. Similarly for a fixed $x$, the broader salmon color bands show the range in which any new observation to be made in the future will lie.

 \begin{figure*}\label{graph7_5}
\centering
\includegraphics[width=3.1in]{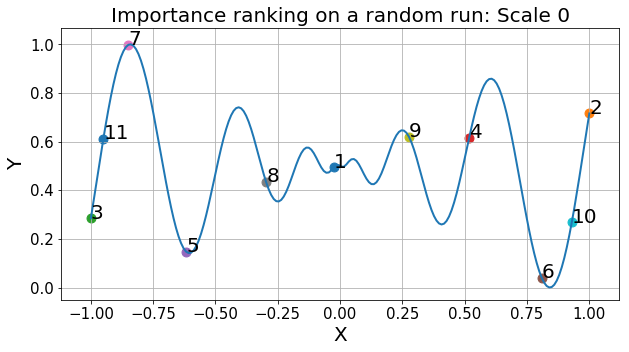}
\caption{Importance ranking of the sparse representation selected at scale 0 for TF2}

\end{figure*}

The results here also confirms the fact that with increasing scales both the bounds become very thin showing confidence in the approximation produced.

 \begin{figure*}\label{graph8}
\centering
\includegraphics[width=6in]{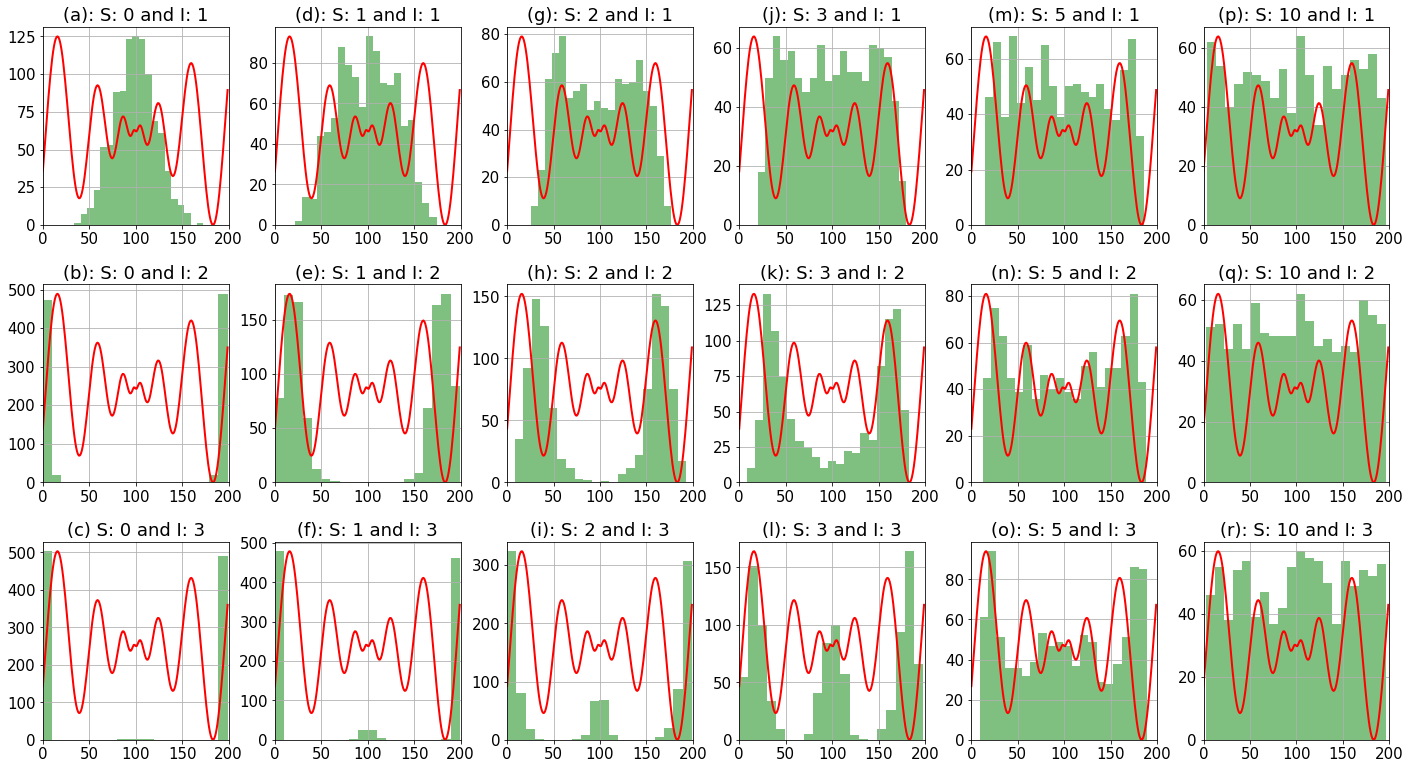}
\caption{Histogram of the top 3 most important points selected for TF2 (shown in red in each subplot). Here along the columns we have the increment in scale (S) and along the rows we have shown the histogram of the first, second and third most important point respectively (represented with abbreviation I for Importance).}
\end{figure*}

\subsubsection{Importance metric for design points}

This section aims at further exploring the application of the proposed algorithm. The results of the current study are presented in figure \ref{graph7_5} and \ref{graph8}. The idea is based upon the requirement that besides just getting the sparse representation at each scale, sometimes we also need to define an ordering of data points in $D^s_{sparse}$ in decreasing order of importance with respect to efficient reconstruction of $f$. This is important because if a measurement at a design point is of very high priority, then more resources could be engaged to measure that particular observation accurately. Also if we need to further compress the data, then in what sequence the datapoints can be removed. A test case is shown in figure \ref{graph7_5}. It contains the importance ranking computed based on the order in which the points were sampled while getting the sparse representation. It shows that the most important point (rank 1) at scale 0 is near the center of the function (and hence should rank last in the ordering for deletion). This confirms our belief that an observation near the center is crucial to capture the behavior of the function to be approximated. The second and third most important points (rank 2 and 3 respectively) are found to be at the very end of the curve which again is logical based on the fact that algorithm needs precise information to capture the function at the edges. These results also make sense because of the inherent symmetry of the 1-D schwefel function under consideration. It should be noted that the location of these important points depends on the nature of the function under consideration, so for any other functions, the location of important points might be very different as compared to the ones obtained in figure \ref{graph7_5}.

In order to get a more detailed picture of importance metric, we ran 1000 simulations of Algorithm \ref{multiscale_algo} and presented the distribution (for location) of the top 3 most important points. The results are presented in figure \ref{graph8}. Here the X-axis varies from 0 to 200 because we considered 200 sampled points from TF2 as our learning set. The analysis was run for scales 0, 1, 2, 3, 5, 10 for studying the behavior of the distribution. It could be very well seen here that all the weight of the distribution for the most important point is concentrated at the center of the domain for the initial scales (top row). For the same scales, the second (middle row) and third (bottom row) most important points have all their mass concentrated at the edges. However, if we move towards the critical scale (where all points are included in the sparse representation), all the points have approximately uniform importance density distribution. This is also expected as when all points are sampled, no single point is more important than the other.

\subsubsection{For non-uniform sampling of data}

We illustrate the performance of the proposed algorithm on non-uniformly sampled data (and fewer data points) in Figure \ref{graph9}. Here in plot (a), we can see the performance when the learning dataset composed of functional values at 40 randomly chosen locations. Here we have also shown the true function for visualizing the quality of the approximation generated. If we look closely at the reconstruction in figure \ref{graph9}(a) around $x = 1.4$, then the ability of the proposed approach to capture the respective peak in the underlying function even though there was not enough data to reflect it is clearly visible. Besides 40, the reconstruction has also been shown (in figure \ref{graph9}) for datasets consisting of 50, 60 and 70 points (b, c and d respectively) with promising results.

 \begin{figure*}\label{graph9}
\centering
\includegraphics[width=5in]{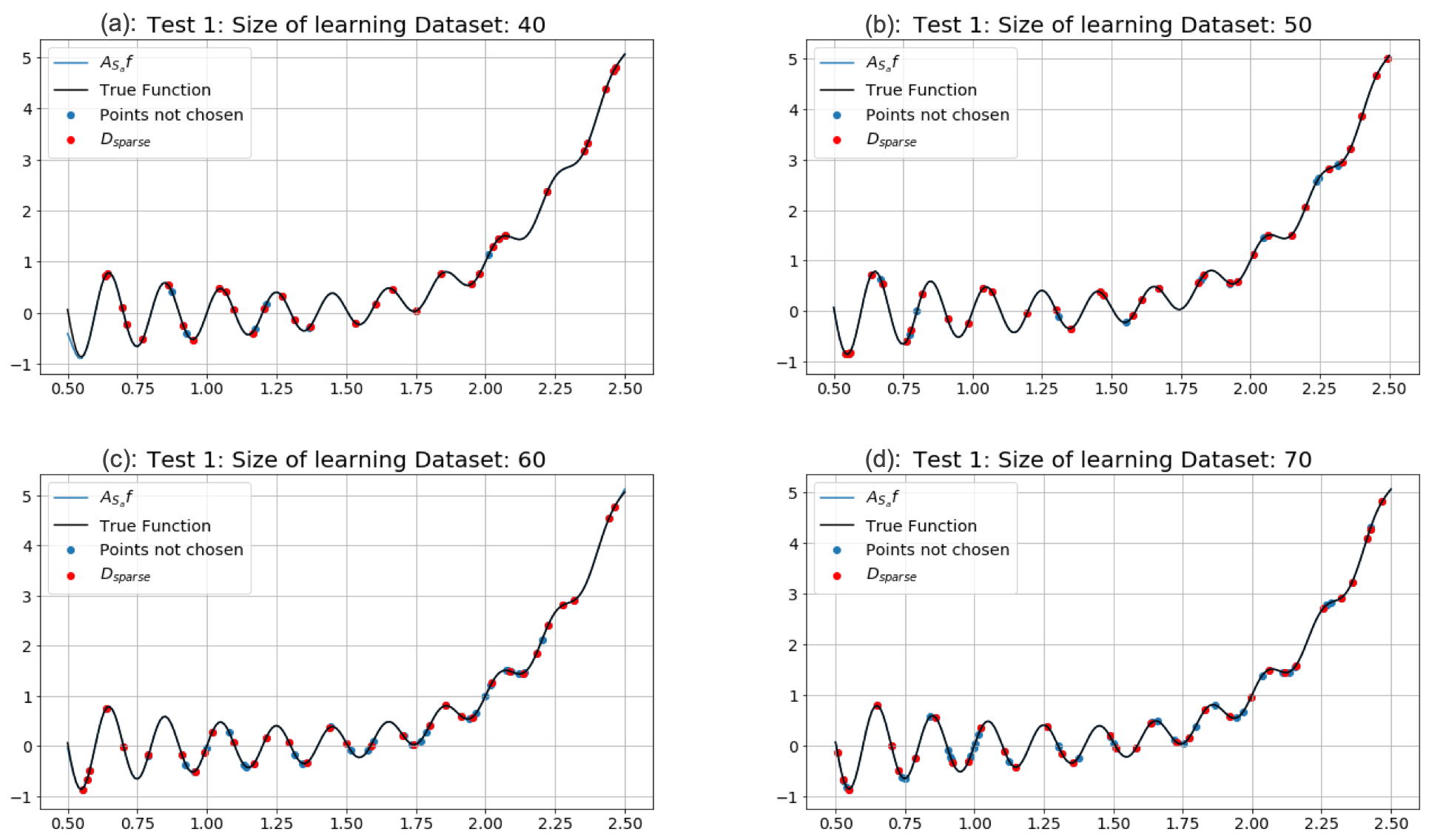}
\caption{Performance of Algorithm \ref{multiscale_algo} with non-uniformly sampled data on TF1. The sparse representation ($D^s_{sparse}$), the deleted data, the true function (black) and the approximation (blue) are all shown for proper comparison}
\end{figure*}

 \begin{figure*}\label{graph12}
\centering
\includegraphics[width=6in]{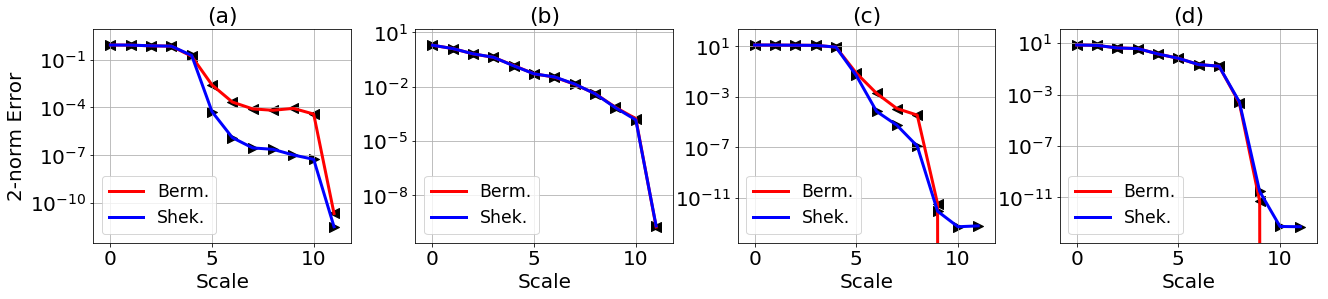}
\caption{Comparative behavior of the decay of $2$-$norm$ error for the Multiscale extension algorithm from \cite{bermanis2013multiscale} (represented as `Berm') and our hierarchical algorithm \ref{multiscale_algo} (represented as `Shek') }
\end{figure*}

Before moving forward with application on real datasets, the next subsection provides a comparison of the performance of Algorithm \ref{multiscale_algo} with algorithm 4 in \cite{bermanis2013multiscale}.

\subsubsection{Comparison with Algorithm 4 in Bermanis et. al}

As mentioned earlier,  \cite{bermanis2013multiscale} was one of the major motivators for the current work. In that paper  algorithm 4 for multiscale data sampling and function extension comes very close in behavior to our hierarchical approach. Briefly the idea there can be summarized as follows. Suppose the approximation at scale $s$ is represented as $H_sf$. Therefore starting with scale 0, $H_0f$ is the approximation to $f$ produced at scale $0$. Thus we can write
\[
(H_0f)|_X   \approx f|_X
\]

However, when we move further, the error orthogonal to the search space at the previous scale becomes the target function for the next scale. Therefore,
\[
(H_1f)|_X  \approx (f-(H_0f))|_X
\]
\[
\vdots
\]
\[
(H_sf)|_X  \approx (f-  \sum_{i = 0}^{s-1}H_if)|_X
\]

With a user defined error tolerance (err), the authors define the convergence scale ($s^*$) as the scale satisfying
\[
||(f-  \sum_{i = 0}^{s^*}H_if)|_X|| \leq err
\]

Therefore, final approximation to f is of the form
\begin{equation}\label{berm}
f \approx H_0f + H_1f + ... + H_{s^{*}}f
\end{equation}

 \begin{figure*}\label{graph13}
\centering
\includegraphics[width=6in]{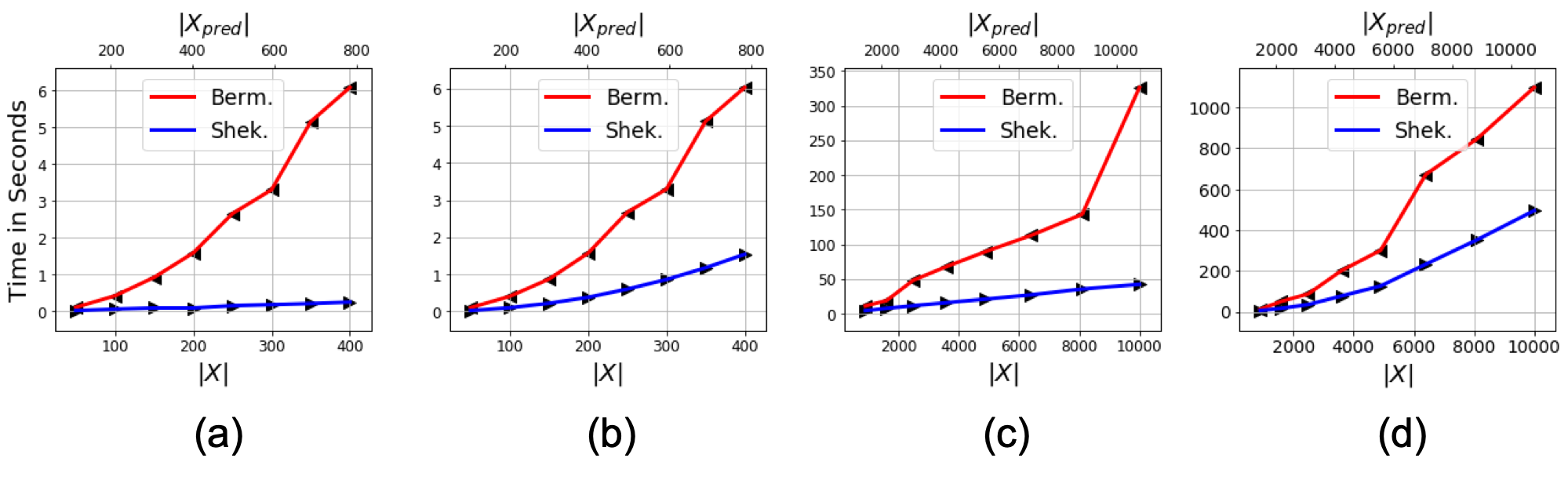}
\caption{Comparison of prediction time in seconds for algorithm 4 from \cite{bermanis2013multiscale} (represented as `Berm') and our hierarchical algorithm \ref{multiscale_algo} (represented as `Shek'). Here the bottom X-axis show the cardinality of the learning dataset with top X-axis representing the cardinality of prediction datapoints}
\end{figure*}

Hence algorithm 4 in \cite{bermanis2013multiscale} has bases from all scales for the final approximation. If we take a closer look at our algorithm (Algorithm \ref{multiscale_algo}), we see that it also samples bases at each scale. However, it just uses the bases at convergence scale as the final bases. The behavior of the algorithm 4 from  \cite{bermanis2013multiscale} is compared with Algorithm \ref{multiscale_algo} in figure \ref{graph12}. Here the decay of error shows similar behavior for both the algorithms. For TF1, Algorithm \ref{multiscale_algo} shows some faster convergence. However, for TF3 and TF4, the multiscale extension algorithm reduces the error to below machine precision faster than Algorithm \ref{multiscale_algo}. Although quite impressive, this doesn't make Algorithm 4 from \cite{bermanis2013multiscale} any more useful because we are already at error levels of $10^{-13}$ at such higher scales. 

With comparative learning behavior, we now move to the comparison of prediction capability. If we think of prediction at a new design point for the case of Algorithm 4 in  \cite{bermanis2013multiscale}, then we will have to keep track of points sampled at each scale from $s= 0$ to $s = s^*$. Once we have that, we can combine the formulated bases of the prediction points points with respect to these points linearly using the projection coordinate at each scale as in equation (\ref{berm}). This is where our algorithm outperforms Algorithm 4 by only just requiring the bases formulated with respect to the sparse representation at the convergence scale. This characteristic of Algorithm 2.1 allows us to talk about sparse representation of the dataset which is not possible with the definition of Algorithm 4 in  \cite{bermanis2013multiscale}. Figure \ref{graph13} shows this more clearly. Here we have measured the time which each of the algorithms take for just prediction, with all the learning assumed to be performed beforehand. Here $|X|$ denotes the size of the data used for learning and $|X_{pred}|$ shows the number design points at which the prediction is to be made. Figure \ref{graph13} visually shows the non-trivial improvement in the prediction latency by our hierarchical approach with repsect to the multiscale algorithm from \cite{bermanis2013multiscale}.

 \begin{figure*}\label{graph14}
\centering
\includegraphics[width=3in]{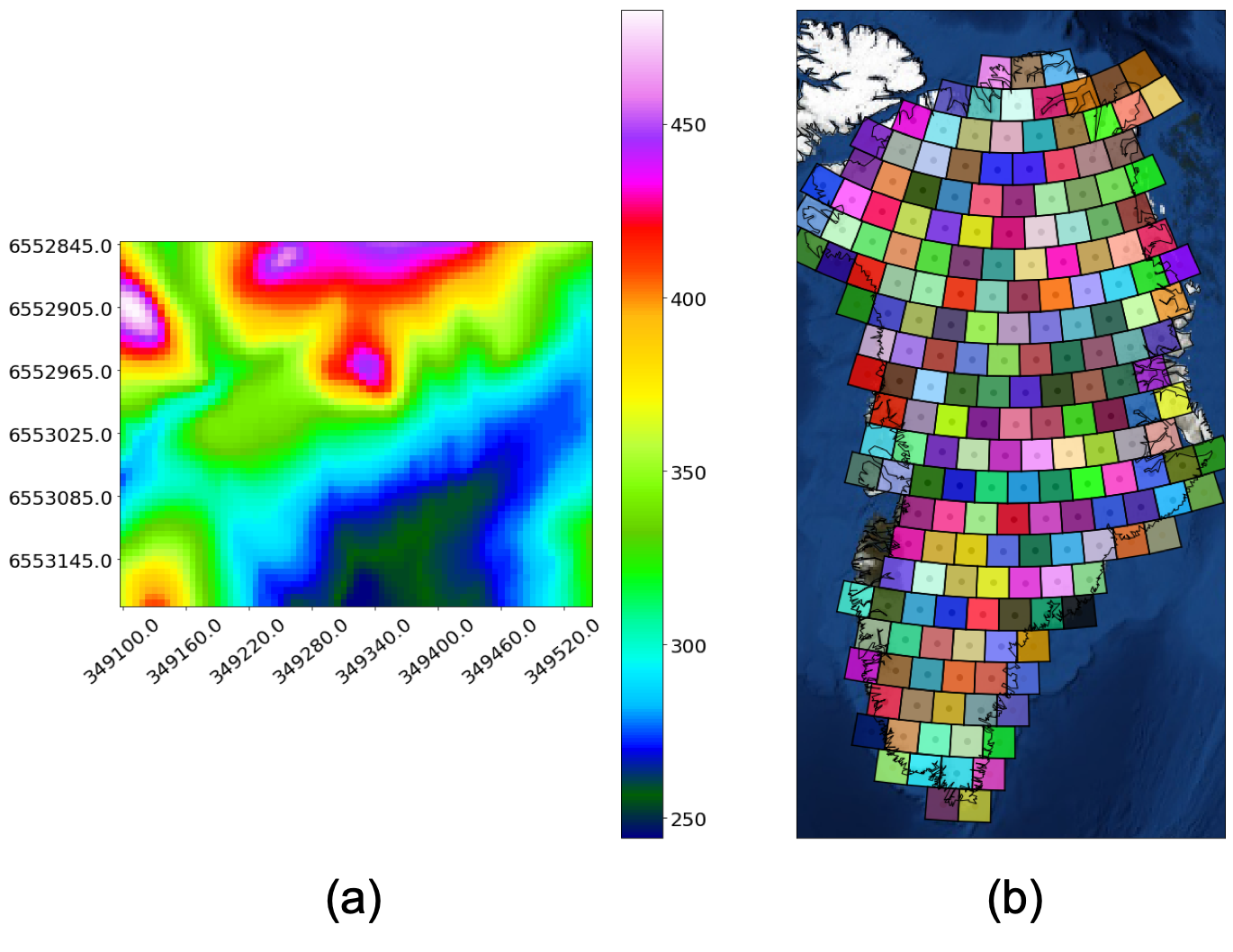}
\caption{(a): Contour plot for Digital Elevation Model (DEM) data; (b): Distribution of the GRACE mascons \cite{luthcke2013antarctica} on Greenland ice sheet showing the spatial resolution of the GRACE data. We use both these datasets for studying the sparse representation and reconstruction capacity of Algorithm \ref{multiscale_algo}}
\end{figure*}

\subsection{Application on Real Data}

In this section we have analyzed the performance of Algorithm \ref{multiscale_algo} on datasets from some practical scenarios. Specifically we are dealing with two different datasets here. The first dataset is spatial in nature (figure \ref{graph14} (a)) where the objective is to learn the sparse representation with the demonstration of capability to reconstruct the data from $D^s_{sparse}$. Here we discuss one additional application, that relates to improving the associated inherent spatial resolution. 

The second category of dataset considered here comes from numerical modeling of gravity measurement changes (figure \ref{graph14} (b)) observed over the Greenland Ice sheet. Here again we analyze the capacity of Algorithm \ref{multiscale_algo} to construct sparse representation of the dataset and reconstruct the dataset from $D^s_{sparse}$.

\subsubsection{Application on Spatial Dataset - generating $D^s_{sparse}$}

For this paper we are considering a particular type of spatial dataset known as Digital Elevation Model (DEM). It is a topographical map of a particular region. The data is arranged on grids with each grid node $(x,y)$ associated with a height measurement. Here X and Y are projection coordinates on a horizontal plane (usually transformed from latitudes and longitudes). Here, we are considering the DEM dataset shown in figure \ref{graph14} (a). The idea in this study is to generate a sparse representation of the DEM data and study the reconstructions produced by these representations as we move up the scale. Figure \ref{graph15} show these results for scale 0,2 and 6. These specific scales were chosen so as to provide an idea of how the surface is evolving towards the starting scale and towards the end. The important thing to note here is that at scale 6, with $D^s_{sparse}$ only consisting of 763 points out of 4350 points, the algorithm was able to generate a reconstruction to the DEM where the prediction error in $\infty-norm$ is just 6.82 (compared to the range of variation observed as $ \sim [250,500]$ in the colorbar in figure \ref{graph14}(a) ).  The compression metric of ($763/4350 \sim 17.5 \%$)  clearly shows the success of the algorithm in generating a sparse representation for the dataset. Here $\infty-norm$ was chosen as an error measure as it upper bounds the error at any individual point and gives an intuitive understanding of the performance of the algorithm.

 \begin{figure*}\label{graph15}
\centering
\includegraphics[width=6.1in]{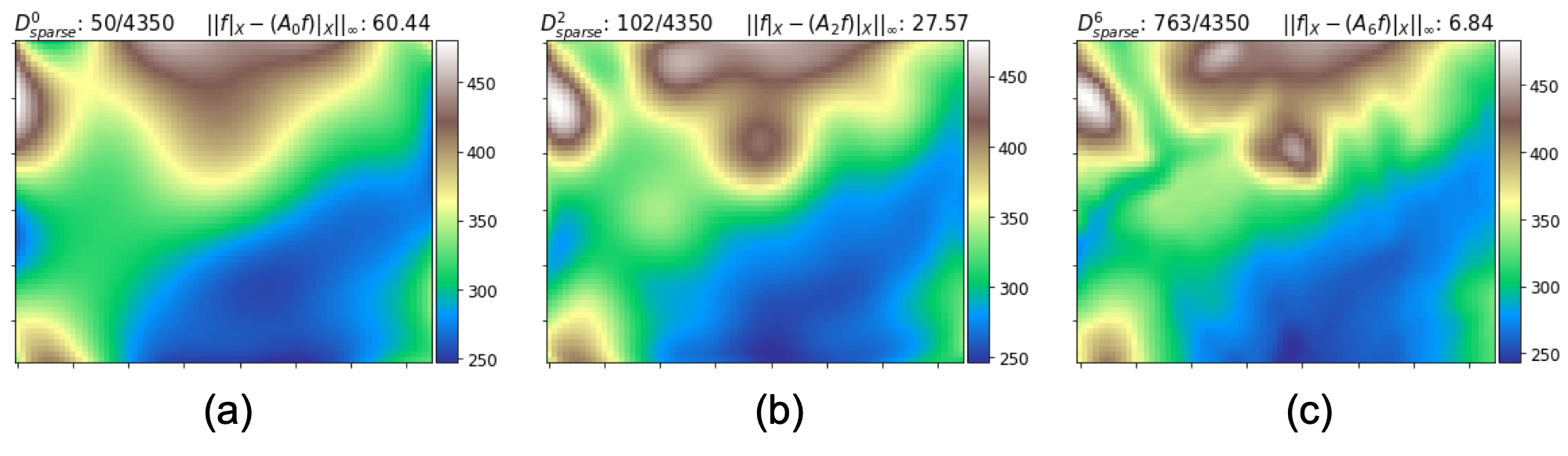}
\caption{Scale dependent reconstructions for the DEM dataset in figure \ref{graph14} (a). Here (a), (b) and (c) show the reconstructed contour plot for scales 0, 2 and 6 respectively. The header on these plots also show the the proportion of dataset selected as the sparse representation along with the $\infty-norm$ of the error in prediction.}
\end{figure*}

\subsubsection{Application on Spatial Dataset - Improving resolution}

For many practical engineering problem like flow simulations, if the modeling region is relatively small, then the resolution of the DEM dataset for the topography plays a very crucial role in the accuracy and stability of the results. The resolution here is defined as the length of side of the square which determines one pixel value. This can also be stated as the length of a cell boundary in the gridded data. This problem of low resolution is also difficult to solve because even if we have DEM data of the same small region from a different source, it usually is from a different time epoch and so there is no guarantee that the topography would not have changed in this time duration. 

The DEM dataset shown in figure \ref{graph14}(a) has a resolution of 60m. However for some particular analysis it might be required to approximate the topography at an even higher resolution. For this reason in this section we have presented the result obtained after interpolating the DEM to a 20m resolution. This result has been presented in figure \ref{graph19}(a). Although here we do not have a proper measure for the performance of the proposed algorithm for this particular task (as the ground truth is unknown), the improved clarity and smoothness of features from top panel to bottom panel in figure \ref{graph19}(a) shows promising nature of results.

 \begin{figure*}\label{graph19}
\centering
\includegraphics[width=6in]{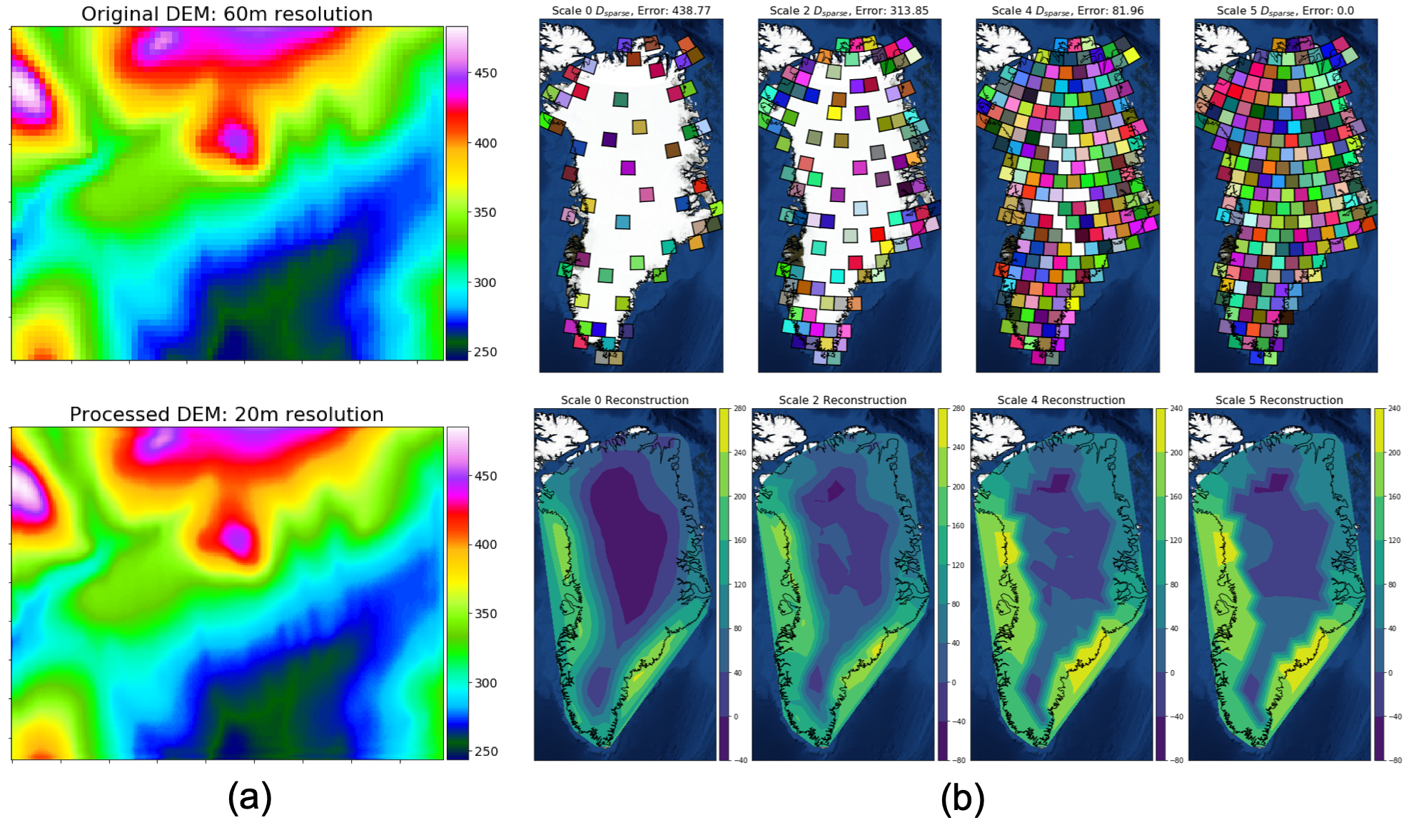}
\caption{(a): Here the top panel shows the DEM dataset in the original resolution (same as figure \ref{graph14}(a)). The bottom panel here shows the result after increasing the resolution to 20m; (b): Application of Algorithm \ref{multiscale_algo} on Greenland Mascons. Here the top row shows the mascons selected at different scales (0,2,4 and 5). The bottom row on the other hand shows the reconstruction of the dataset from the sparse representation selected at each of these respective scales.}
\end{figure*}

\subsection{Application on data from Numerical models}:

In this section we consider the output from a numerical model which has been widely used in the literature \cite{watkins2015improved,alexander2016greenland} for determining the ice mass evolution of Antarctic and Greenland Ice sheets. \cite{luthcke2013antarctica} introduced a iterative strategy for generating the Gravity Recovery and Climate Experiment (GRACE) global solution of equal-area surface mass concentration parcels (also referred to as mascons) in equivalent height of water. Figure \ref{graph14}(b) shows the distribution of these mascons over Greenland ice sheet. The idea is to derive spatially and temporally distributed changes in the mass of land ice at 1 arc degree (approximately 100 km). These mascons are estimated directly from k-band range and range rate (KBRR) data for two co-orbitting satellites roughly 220 km apart. Here each of these approximately $100 \times 100\ Km^2$ square regions have a time series for solution of each mascon in cm. equivalent of water height. For this study we are using the data product $V02.4$ (not corrected for glacial isostatic adjustments). In total, for the entire planet there are 41168 mascons divided broadly among land, ice and water. The time series associated with each of these mascons has 148 entries. We have assumed these observations to be associated with the middle of the mascon solution time window. However, for this study, we just use the spatial aspect of this dataset (just consider observations only at t=0 for all mascons) and showcase the capability of our approach to generate a sparse representation for this dataset. Figure \ref{graph19}(b) shows the sparse representation of the mascons at scale 0, 2, 4 and 5. In the bottom row, the corresponding reconstruction from the sparse representation has been shown as well. The main thing to notice here is that even at scale 0, with only a portion of the original mascons, our approach was able to learn the behavior of the dataset (this is evident from the comparison of reconstructions from the different scales)

\section{Conclusion}

In this paper, we have introduced a hierarchical method for learning a sequence of sparse representations for a large dataset. The hierarchy comes from  approximation spaces defined based on a scale parameter. Principally, the proposed  approach has been shown to be useful for data reduction applications coupled with learning a model for representing the data.  The paper provides  analysis  that explains and studies the theoretical properties of the proposed approach. We derive bounds for stability, convergence and behavior of error functionals. In the results section, we have shown the performance of the approach as data reduction mechanism on both synthetic and real datasets (geo-spatial and numerical modeling datasets). The sparse model generated by the presented approach is also shown to efficiently reconstruct the data, while minimizing error in prediction.

Though the results shown in this paper depict the efficiency of the approach on a variety of datasets and settings, there are several areas in which the presented algorithm can be improved. Firstly, the implementation of the algorithm can be made more efficient by either optimizing the operations in the algorithm or by handling chunks of data at a time. Secondly, generation of sparse representation for noisy datasets poses another set of unique challenges which will be addressed in a companion paper under preparation. For this case, properly capturing the uncertainty in the generated sparse approximations becomes very crucial. Finally the hyperparameters like $P$ in (Algorithm \ref{multiscale_algo}) can be explored and studied further for making the approach behave better. This can be done by even utilizing additional information which is not directly observed but is inherently known (like the physics of a system). Hence, in essence the work presented  addresses the need for efficient learning methods for large datasets and opens up new interesting approaches.

\bibliographystyle{siamplain}
\bibliography{SIMOD_pshekhar}

\end{document}